\newtheorem{proof}{Proof}
\newtheorem{proposition}{Proposition}
\newtheorem{corollary}{Corollary}
\title{Counterfactual Fairness with \\
Disentangled Causal Effect Variational Autoencoder}
\author {
    \normalsize{
        Hyemi Kim\textsuperscript{\rm 1},
        Seungjae Shin\textsuperscript{\rm 1},
        JoonHo Jang\textsuperscript{\rm 1},
        Kyungwoo Song\textsuperscript{\rm 1},
        Weonyoung Joo\textsuperscript{\rm 1},
        Wanmo Kang\textsuperscript{\rm 2},
        Il-Chul Moon\textsuperscript{\rm 1}}\\
}
\begin{document}

\maketitle

\begin{abstract}
The problem of fair classification can be mollified if we develop a method to remove the embedded sensitive information from the classification features. This line of separating the sensitive information is developed through the causal inference, and the causal inference enables the counterfactual generations to contrast the what-if case of the opposite sensitive attribute. Along with this separation with the causality, a frequent assumption in the deep latent causal model defines a single latent variable to absorb the entire exogenous uncertainty of the causal graph. However, we claim that such structure cannot distinguish the 1) information caused by the intervention (i.e.,  sensitive variable) and 2) information correlated with the intervention from the data. Therefore, this paper proposes Disentangled Causal Effect Variational Autoencoder (DCEVAE) to resolve this limitation by disentangling the exogenous uncertainty into two latent variables: either 1) independent to interventions or 2) correlated to interventions without causality. Particularly, our disentangling approach preserves the latent variable correlated to interventions in generating counterfactual examples. We show that our method estimates the total effect and the counterfactual effect without a complete causal graph. By adding a fairness regularization, DCEVAE generates a counterfactual fair dataset while losing less original information. Also, DCEVAE generates natural counterfactual images by only flipping sensitive information. Additionally, we theoretically show the differences in the covariance structures of DCEVAE and prior works from the perspective of the latent disentanglement.
\end{abstract}

\section{Introduction}
\begin{figure}[t]
  \centering
  \includegraphics[width=0.99\linewidth]{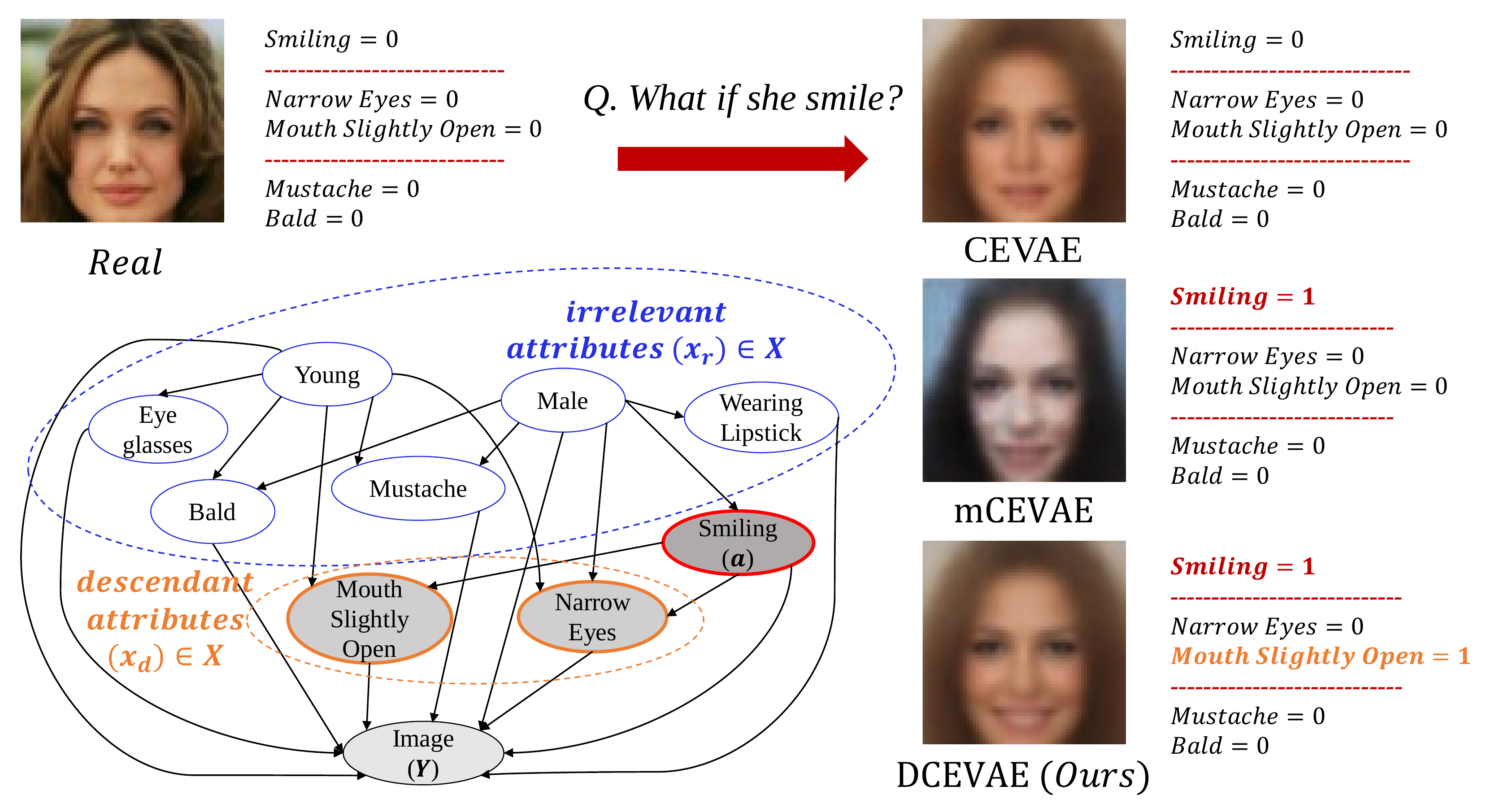}
  \caption{Counterfactual examples have a set of information attributes that are either 1) maintained or 2) altered when the intervention variable, $a$, is altered. For example, a counterfactually generated image for $a_{\textit{Smiling}}=0$ should be labeled as $a_{\textit{Smiling}}=1$, and 
such change may cause the subsequent change on descendant attributes of $a$, $x_d$ (i.e. \textit{Mouth Slightly Open}, \textit{Narrow Eyes}) by maintaining the other attributes intact, $x_r$. Prior works of CEVAE and mCEVAE fail to maintain the irrelevant attributes of $a$, while DCEVAE shows a counterfactual image with the irrelevant attributes of $a$, undisturbed.}
  \label{fig:overall}
\end{figure}
\noindent
Machine learning has penetrated our lives so deep, and its fairness and societal utilization have become a growing concern in our society \cite{aleo2008foreclosure, kim2019multiaccuracy}. The incident of \textit{COMPAS} \cite{brennan2009evaluating} shows that the learning model can be a source of unfairness in our judicial system by discriminating people by race. Given that the learners become unfair only because of training data \cite{hardt2016equality}, we ask the question of whether it is feasible to correct its unfairness from the data or not. Particularly, our concept of unfairness comes from our societal principle on equal treatments across races, gender, religion, etc., a.k.a. \textit{sensitive variables}, without prejudice. Then, the key question on the machine learning research becomes whether we can separate such prejudices embedded in the data algorithmically or not.

Considering the prejudice by the sensitive variable, the causal inference is an interesting tool to separate the factors contributing to decision-making. The objective of causal inference is learning the causal effect of an intervention variable, $a$, on individual features, $x$, and an outcome, $y$. Here, if we regard the intervention variable in causality as the sensitive variable in decision-making, the learning fairness can be formulated as the causal inference task \cite{zhang2018causal, chiappa2019path,wu2019counterfactual, kilbertus2017avoiding}. For example, a causal model estimates the effect of sensitive variables, such as race and gender, on an admission result \cite{kusner2017counterfactual}. Another study shows a causal model predicting a medication's effect on a patient's prognosis \cite{pfohl2019counterfactual}. If we focus on modeling the exogenous uncertainty with Variational Autoencoder (VAE) \cite{louizos2017causal, pfohl2019counterfactual}, it has been a common practice to introduce a single latent variable to reflect all exogenous uncertainty.

We separate different causal effects into multiple latent variables, so the diverse aspects of an intervention, features, and an outcome can be related in complex causal graphs. Subsequently, this separation of causal effects by factors enables complex counterfactual example generations because we can only intervene in the sensitive variables by leaving other variables intact. This counterfactual generation becomes our barometer in how fair a learning model is. If a model is fair, the model should result in the same classification for both original and counterfactual instances with an altered sensitive variable.

This paper starts by claiming the limitation of modeling the exogenous uncertainty with a single latent variable, and this paper develops a disentangling structure, or Disentamgled Causal Effect VAE (DCEVAE), for counterfactual generations to relax the limitation. Unlike the previous approaches with a single latent variable to model all features \cite{shalit2017estimating, louizos2017causal, pfohl2019counterfactual}, DCEVAE separates the latent variable to model the exogenous uncertainties either from the intervention or from the feature without the intervention.

As DCEVAE disentangles the uncertainty into two latent variables, DCEVAE has more accurate estimation performances on the total effect and the counterfactual effect compared to Causal VAE models with a single latent. DCEVAE added counterfactual fairness regularization to generate counterfactual fair examples with less transformation on the original dataset. Also, DCEVAE generates counterfactual images that do not naturally occur in the dataset, i.e., women with Mustache, through interventions. Finally, we analyze DCEVAE structure from the perspective of linear VAE, and we show DCEVAE is structured to separate the posterior covariance of the sensitive and the feature exogenous uncertainties.

\section{Preliminaries}

\subsection{Counterfactual Fairness Problem Formulation}
The final goal of this paper is to provide a counterfactual fair classification method through the latent disentanglement. From this aspect, we start our formulation from the definition of \textit{fairness}. We define $A$ as the sensitive attributes of an individual, which should not be used for discriminative tasks; $X$ as the other observed attributes of individuals; $Y$ as the dependent variable to estimate; and $\hat{Y}$ as the model estimation. \cite{kusner2017counterfactual} suggests the definition of counterfactual fairness and its relation to a causal graph.

A causal graph specifies $\mathcal{M}=\langle \mathbf{U},\mathbf{V},\mathbf{F},\mathbf{P}(u) \rangle$; and $\mathbf{V}$ is the set of endogenous variables, $P(v):=P(V=v)=\sum_{\{u|f_V(V,u)=v\}} P(u)$; and $\mathbf{U}$ is the set of exogenous variables, i.e. the stochastic elements of a variable; and $\mathbf{F}$ is the set of deterministic functions, $V_i=f_{V_i}(PA_{V_i}, U_{V_i})$ with indicating the parents of $V_i$ as $PA_{V_i}$ in a causal graph. With a causal graph, Eq. \ref{eq:counterfactual_fairness} defines the counterfactual fairness. 
\begin{equation}
\begin{aligned}
\label{eq:counterfactual_fairness}
P\big(\hat{y}_{A\leftarrow a}(U)&=y|X=x,A=a\big)\\
&=P\big(\hat{y}_{A\leftarrow \neg a}(U)=y|X=x,A=a\big)\\
\end{aligned}
\end{equation}
for all $y$ and any value $\neg a$ attainable by $A$.
Here, $U$ is the set of exogenous variables, and $\hat{Y}$ becomes two different variations by either $a$ or $\neg a$. The counterfactual fairness asserts that the estimated distributions on $\hat{Y}$s should be identical regardless of the sensitive value, $a$.

\begin{figure*}
\centering
\tabskip=0pt
\valign{#\cr
  \hbox{
    \begin{subfigure}{.12\textwidth}
    \centering
    \includegraphics[width=0.9\textwidth]{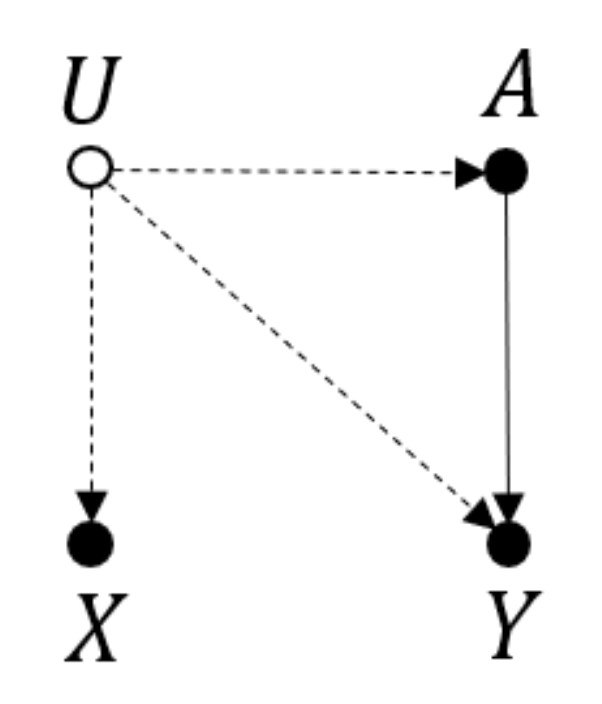}
    \caption{}
    \end{subfigure}
  }\vfill

  \hbox{
    \begin{subfigure}{.12\textwidth}
    \centering
    \includegraphics[width=0.8\textwidth]{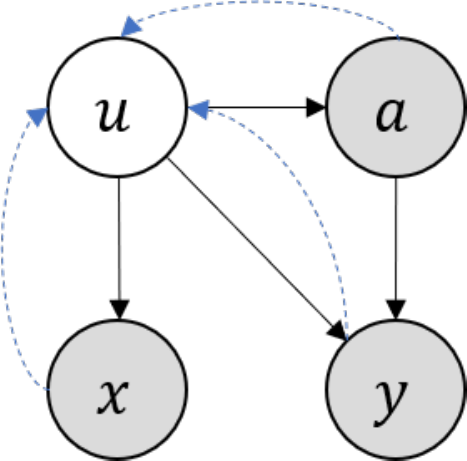}
    \caption{}
    \end{subfigure}
  }\cr
\hbox{
    \begin{subfigure}{.12\textwidth}
    \centering
    \includegraphics[width=0.9\textwidth]{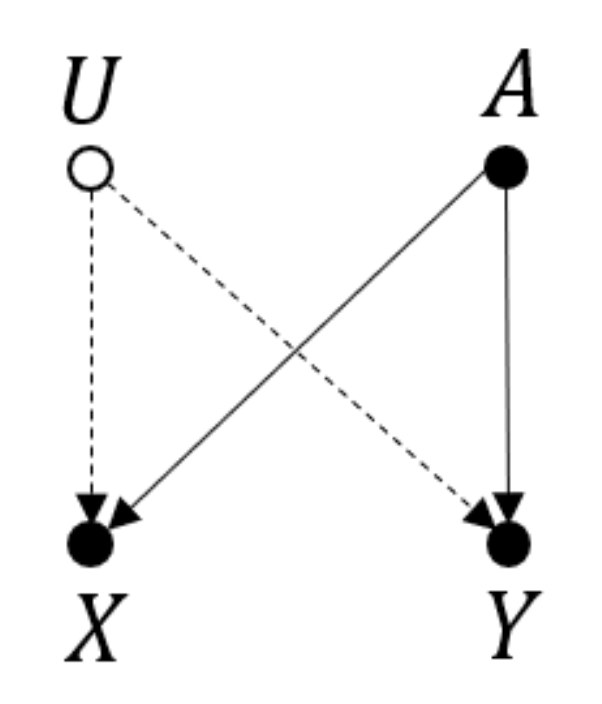}
    \caption{}
    \end{subfigure}
  }\vfill
  \hbox{
    \begin{subfigure}{.12\textwidth}
    \centering
    \includegraphics[width=0.8\textwidth]{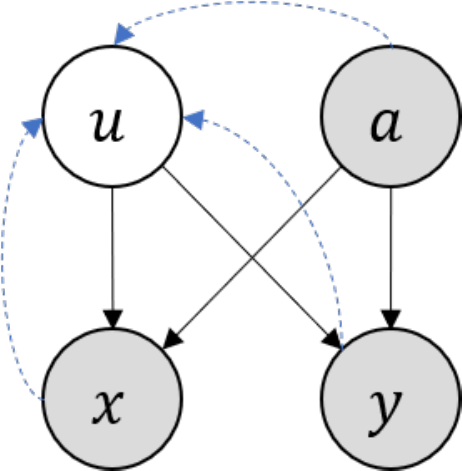}
    \caption{}
    \end{subfigure}
  }\cr
\hbox{
    \begin{subfigure}{.23\textwidth}
    \centering
    \includegraphics[width=0.8\textwidth]{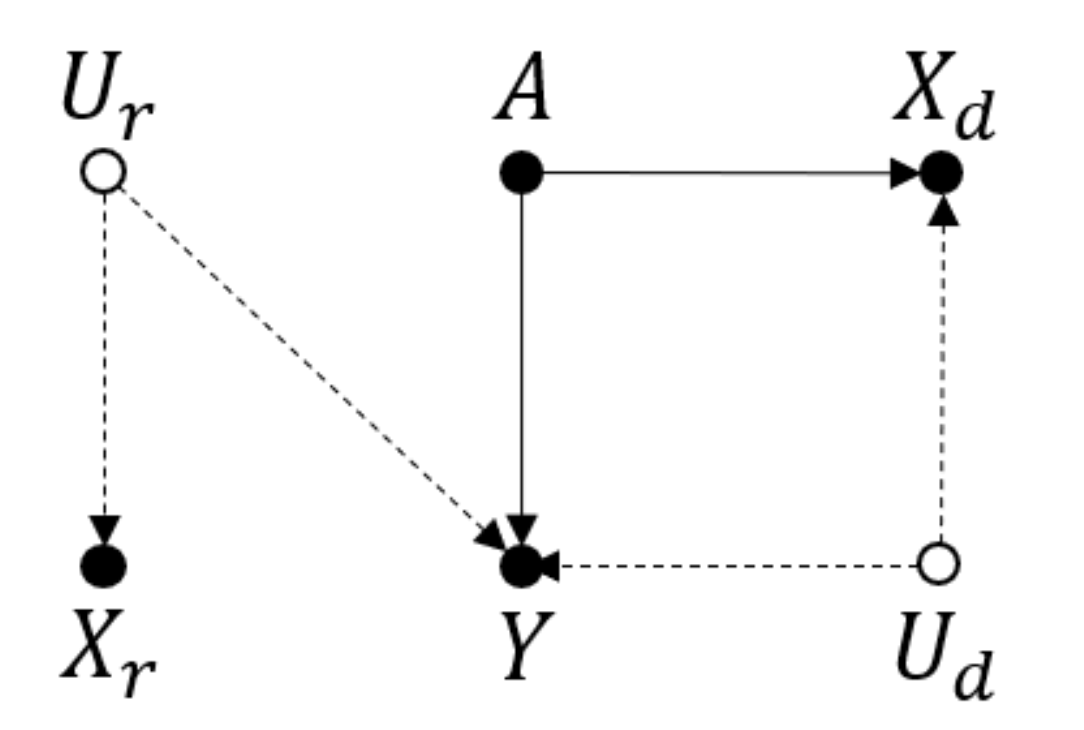}
    \caption{}
	\label{fig:mine_causal}
    \end{subfigure}
  }\vfill
  \hbox{
    \begin{subfigure}{.23\textwidth}
    \centering
    \includegraphics[height=1.9cm, width=0.75\textwidth]{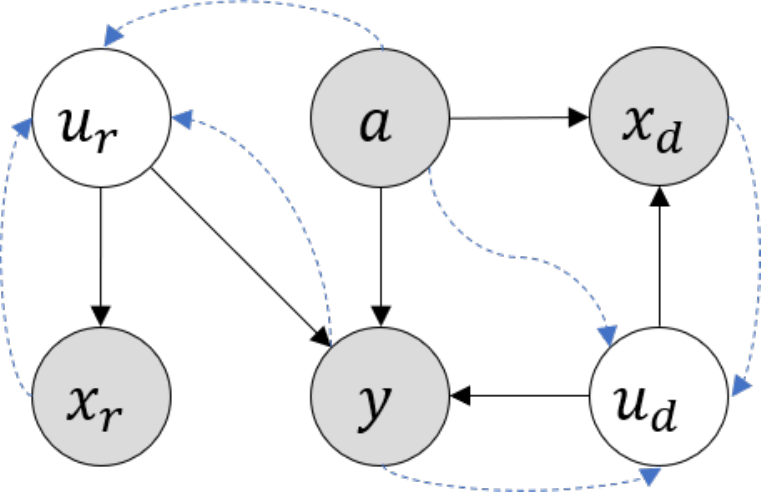}
    \caption{}
	\label{fig:mine_bayes}
    \end{subfigure}
  }\cr
	\hbox{
    \begin{subfigure}[b]{.25\textwidth}
    \centering
    \includegraphics[height=4.5cm,width=0.85\textwidth]{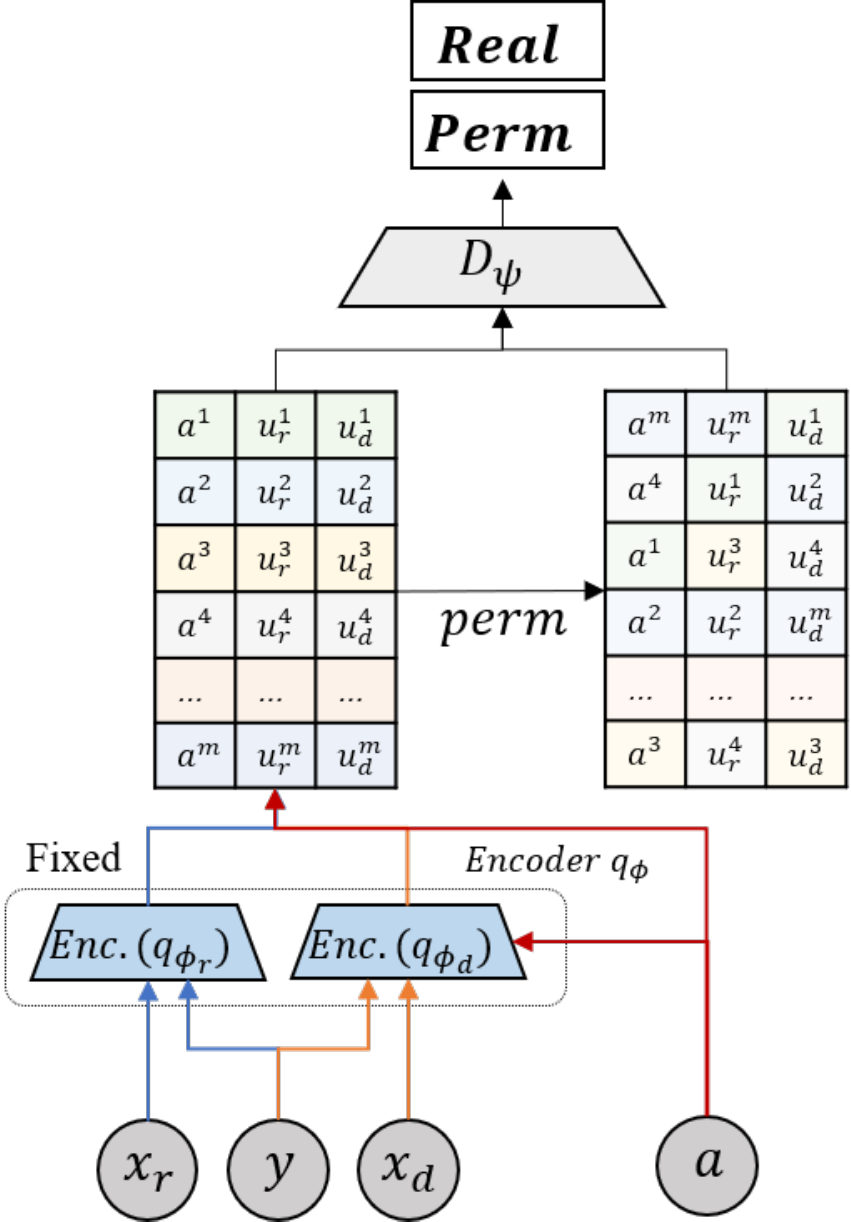}
    \caption{Neural Network (in max phase)}
    \end{subfigure}
	\label{fig:mine_train_max}
  }\cr
\hbox{
    \begin{subfigure}[b]{.3\textwidth}
    \centering
    \includegraphics[height=4.5cm,width=0.85\textwidth]{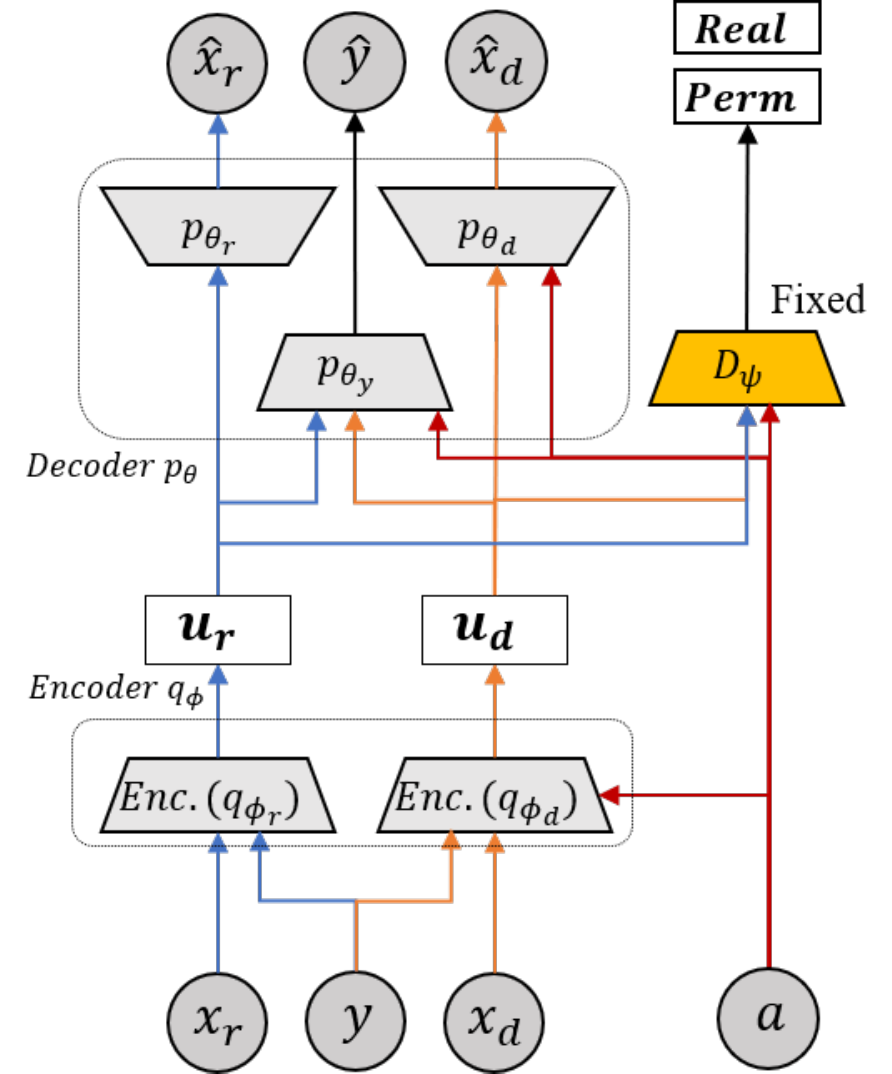}
    \caption{Neural Network (in min phase)}
	\label{fig:mine_train_min}
    \end{subfigure}
  }\cr
}
\caption{(a,c,e) Causal graph of CEVAE, mCEVAE, and DCEVAE. If a domain and its causal graph is given, the endogenous variables, $x$, of the given causal graph is mapped to either $x_r$ or $x_d$; (b,d,f) Bayesian network corresponding to the causal graph of CEVAE, mCEVAE, and DCEVAE.Arrows with solid lines denote generative process, and arrows with dashed lines denote
inference process; (g) Neural network structure of DCEVAE in max phase ($\max \mathcal{M}_{D}$) (h) Neural network structure of DCEVAE in min phase ($\min \mathcal{L}_{\textit{DCEVAE}}$)}
\end{figure*}
\subsection{Causality with Variational Autoencoder}
\citet{louizos2017causal} showed that modeling exogenous variable, $U$, in a causal graph can be interpreted as an inference task on the latent variables in variational autoencoder (VAE) \cite{kingma2013auto}. The evidence lower bound (ELBO), $\mathcal{M}_{ELBO}$, of VAE is derived as $\log{p(x)}\geq \mathbb{E}_{q_{\phi}(u|x)}\left[p_{\theta}(x|u)+p(u)-\log{q_{\phi}(u|x)}\right]=:\mathcal{M}_{ELBO}$

\citet{louizos2017causal} suggested the modified ELBO in Eq. \ref{eq:cevae_elbo}, based on a causal graph. In Causal Effect Variational Autoencoder (CEVAE), $a$ is correlated with all $x$, and $a$ does not deterministically cause the $x$. If a causal graph models the descendant of $a$ in $x$, the causality from $a$ to $x$ will be embedded in $u$ by $q_\phi(u|a,x,y)$ in ELBO. This embedded $a$ in $u$ interrupts the counterfactual generation of $p(y|\neg a,u)$ because the negation only affects $a$, not the embedded components in $u$.
\begin{align}\label{eq:cevae_elbo}
\mathcal{M}_{\textit{CEVAE}} &= \mathbb{E}_{q(u|a,x,y)}[\log{p(u)} + \log{p(a,x|u)} \nonumber\\
&+ \log{p(y|a,u)} -\log{q(u|a,x,y)}] \nonumber\\
& +{\log{q(a^*|x^*)}} +\log{q(y^*|x^*,a^*)}~,
\raisetag{1\baselineskip}
\end{align}
where $x$, $a^*$, $y^*$ being the observed values in the training set.

To compensate this potential problem in CEVAE, modified version of CEVAE \cite{pfohl2019counterfactual}, or mCEVAE, assumed that $x$ and $y$ are caused by $a$ and $u$. $\mathcal{M}_{\textit{mCEVAE}}$ uses the maximum mean discrepancy (MMD) to regularize the generations to remove the information of $a$ from $u$, but this MMD regularization removes $u$ components that is simply correlated to $a$, not caused by $a$. 
\begin{align}\label{eq:mcevae_elbo}
\mathcal{M}_{\textit{mCEVAE}} &=\mathbb{E}_{q(u|a,x)}\big[\lambda_{x}\log{p(x|a,u)}+\lambda_{y}\log{p(y|a,u)}\big]\nonumber\\
&+\lambda_{1} D_{MMD}\big(q_\phi(u)||p(u)\big)\nonumber\\
&+\lambda_{2} \sum_{a_k \in \mathcal{A}}D_{MMD}\big(q_\phi(u|a=a_k)||p(u)\big),
\raisetag{1.5\baselineskip}
\end{align}
We hypothesize that the accurate counterfactual generation lies in the middle of these two models. We define $x_d \subset x$ is a subset of features caused by $a$ whereas $x_r \subset x$ is the other subset of irrelevant features to the intervention. Similarly, we define the exogenous variables of $x_r$ and $x_d$ to be $u_r$ and $u_d$, respectively. When the counterfactual generation is required, the intervention on $a$ should be imposed on $x_d$, and $x_r$ should be maintained. This disentanglement is the fundamental motivation of our model, DCEVAE.

Also, CEVAE separates the decoder network into two alternative functions: $f_{a=0}$ and $f_{a=1}$. Therefore, either decoder tends to be updated according to the observed sensitive variables. For example, $i_{th}$ data $(x_i, a_i=1,y_i)$ is utilized to learn the parameters of ${f}_{a=1}$. When CEVAE makes the counterfactual samples, the latent values from a data instance with $a$ are propagated to the decoder $f_{\neg a}$. However, the lack of examples of $a$ in the training process of $f_{\neg a}$ can cause inaccuracy. To resolve this issue, mCEVAE regularizes latent variables $u_{a=0}$ and $u_{a=1}$ to be similar by using 
\subsection{Disentanglement with Total Correlation}
A latent variable, $u$, is considered to be disentangled if $u_j$ are independent when $j$ indicates a dimension of the variable. where the total correlation is ${TC} = KL\big(q(u)||\Pi_{j=1}^{d}q(u_j)\big)$, where $d$ is a dimension of $u$. When we minimize the TC, the latent is disentangled \cite{kim2018disentangling}.

\section{Methodology}

\subsection{Causal Structure of Disentangled Causal Effect Variational Autoencoder}
\noindent We design the structure of DCEVAE from the causal graph in Figure \ref{fig:mine_causal}. The causal graph specifies the separated causalities from the sensitive variable, $a$, to the feature variables, $x$. Hence, $x$ is divided into the feature variables caused by sensitive information, $x_d$; and the other feature variables, $x_r$. This paper assumes that the causal graph gives the attribute association to either $x_d$ or $x_r$ from the domain. For instance, in Figure \ref{fig:overall}, if we regard $Male$ as a sensitive variable because $Male$ indicates the gender, $x_d$ becomes the set of its descendant variables, $x_d=\{Mustache, Smiling, ...\}$ in the domain causal graph, and $x_r$ is set to be the complementary set of $x_d$ in causal graph variables.

This separation also introduces two corresponding exogenous variables: $u_d$ and $u_r$. As $u_d$ and $u_r$ are exogenous variables in Figure \ref{fig:mine_causal} where we assume that $u_r$ and $u_d$ are disentangled. Also, we assumed that $x_d$ are affected by $a$, not correlated with $a$; so $u_d$ needs to be disentangled with $a$ because $x_d$ will be deterministically caused by $a$ as an endogenous factor. On the other hand, $u_r$, which causes $x_r$, may hold the correlated information of $a$, so we did not disentangled it with $a$ since $x_r$ is correlated with $a$.

The usual set up in potential outcome framework is intervention $a$ precedes outcome $y$, and all features $x$ are precedes $a$. However, $x$ being preceded to $a$ is a strong assumption for the real-world case. The more general case is some of $x$ occur before $a$, the exogenous $u_r$ $\rightarrow$ $x_r$, and the rest of $x$ come after $a$, the exogenous $u_d$ and $a$ will have a common child $x_d$. For instance, let us assume that there is a woman ($a$: \textit{Gender}) who went to a women's only school ($x_d$: \textit{School}). When we compute counterfactual value by intervening $a=\textit{Male}$, then the value of school will change. However, this person's birth year ($x_r$), which is not a descendant of gender, will not be changed. This consideration enables a more general causal graph ordering assumption to be incorporated for counterfactual generation based on VAEs.

\subsection{Bayesian Network of Disentangled Causal Effect Variational Autoencoder}
The causal graph in Figure \ref{fig:mine_causal} translates to a Bayesian network in Figure \ref{fig:mine_bayes}. Two exogenous variables, $u_r$ and $u_d$, in the causal graph are translated into two latent variables in the Bayesian network. This Bayesian network corresponds to the neural network in Figure \ref{fig:mine_train_min}. The neural network consists of an inference structure on two latent random variables ($q_{\phi_r}$ and $q_{\phi_d}$), and the neural network also includes a structure for disentanglement, as $D_\psi$. The objective function of DCEVAE, Eq \ref{eq:whole_loss}, is devised to satisfy the above model structure. The latent variables are inferred by the optimization of $\mathcal{M}_{\textit{ELBO}}$. The disentanglement of the latent variables is resolved by reducing the total correlation, $\mathcal{L}_{TC}$. In the computation of the total correlation, we use the discriminator $D_\psi$ from $\mathcal{M}_{D}$, so we add an optimization on $\mathcal{M}_{D}$ to our objective function, as well. Eventually, our objective becomes the min-max structure to correspond to the counterfactual generation and the latent disentanglement.
\begin{equation}
\begin{aligned}
\ & min_{\theta, \phi}{\ \mathcal{L}_{\textit{DCEVAE}}} := -\mathcal{M}_{\textit{ELBO}} + \beta_{tc} \mathcal{L}_{TC}	\\
\ & max_{\psi}{\ \mathcal{M}_{D}}
\label{eq:whole_loss}
\end{aligned}
\end{equation}
\subsection{Counterfactual Inference and Counterfactual Example Generation}
\noindent Besides the model structure, a causal graph requires the inference on the exogenous variable to estimate the causal effect, including the total effect and the counterfactual effect. We match two types of exogenous variables, $u_d$ and $u_r$, in the causal graph to two corresponding latent variables of DCEVAE. According to the counterfactual inference \cite{pearl2009causality}, a counterfactual $y$ can be inferred in following steps:
\begin{enumerate}
\itemsep0em
  \item \textbf{Abduction} Infer the distribution of $u_d$ and $u_r$ with the encoder network of DCEVAE: $q(u_d|a, x_d, y)$ and $q(u_r|a, x_r,y)$.
  \item \textbf{Action} Substitue $A$ with $\neg a$.
  \item \textbf{Prediction} Compute the probability of counterfactual $y$, with the decoder network of DCEVAE, $p(y|\neg a, u_d, u_r)$.
\end{enumerate}
In the third step, prediction, $x_d$ could be potentially change for intervention $\neg a$, but $x_r$ maintain its value because we consider causal ordering as $u_r\rightarrow x_r$ and $\left(a, u_d\right) \rightarrow x_d$.\\
The identification is shown as follows.
\begin{proposition}
If we recover $p(u_d,u_r,a,x_d,x_r,y)$, then we recover counterfactual effect under the causal model Fig \ref{fig:mine_bayes}.
\end{proposition}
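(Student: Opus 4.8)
\noindent The plan is to show that Pearl's three‑step counterfactual procedure (\textbf{abduction}, \textbf{action}, \textbf{prediction}) can be carried out using nothing but the recovered joint $p(u_d,u_r,a,x_d,x_r,y)$ together with the structural skeleton read off Figure~\ref{fig:mine_bayes}: $x_r$ is generated from $u_r$ alone, $x_d$ from $(a,u_d)$, and $y$ from $(a,u_d,u_r)$, with $u_d$ independent of $(a,u_r)$ while $u_r$ is allowed to be statistically dependent on $a$. Concretely, I would express the target quantity --- the distribution of $Y$ (and, if desired, of the descendant block $x_d$) under $A\leftarrow\neg a$ conditioned on the factual evidence $(a,x_r,x_d,y)$ --- as an explicit functional of $p(u_d,u_r,a,x_d,x_r,y)$, which is precisely what the proposition asks for.

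\noindent\textbf{Abduction.} From the recovered joint, the exact posterior $p(u_d,u_r\mid a,x_r,x_d,y)$ is obtained by conditioning, i.e. dividing by $p(a,x_r,x_d,y)=\iint p(u_d,u_r,a,x_r,x_d,y)\,du_d\,du_r$; so this step is immediate. (Note that this is the full joint posterior over $(u_d,u_r)$ --- not the factorized encoder $q(u_d\mid a,x_d,y)\,q(u_r\mid a,x_r,y)$, which is only a variational approximation, since conditioning on the common child $y$ couples $u_d$ and $u_r$.) I would only flag the standard overlap requirement, namely that $\neg a$ lies in the support of $A$, so the conditionals evaluated at $\neg a$ below are well defined.

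\noindent\textbf{Action and prediction.} Intervening $A\leftarrow\neg a$ leaves the mechanism of $x_r$ untouched (it contains no $a$), so $x_r$ keeps its factual value, and it reuses the mechanisms of $x_d$ and $y$ with $a$ replaced by $\neg a$. The key observation is that each mechanism is itself a conditional of the recovered joint: $p(x_r\mid u_r)$, $p(x_d\mid a,u_d)$ and $p(y\mid a,u_d,u_r)$ are all obtained from $p(u_d,u_r,a,x_d,x_r,y)$ by marginalising the irrelevant coordinates and conditioning, and can be read at the argument $\neg a$. Hence
\begin{equation}
\begin{aligned}
P\big(y_{A\leftarrow\neg a}\mid a,x_r,x_d,y\big)
&= \iint p(y\mid\neg a,u_d,u_r)\\
&\quad\times p(u_d,u_r\mid a,x_r,x_d,y)\,du_d\,du_r,
\end{aligned}
\end{equation}
and the analogous formula with $p(y\mid\neg a,u_d,u_r)$ replaced by $p(x_d\mid\neg a,u_d)$ gives the counterfactual of the descendant features; if the working graph also carries an edge $x_d\to y$, the same argument applies after replacing the prediction kernel by $\int p(y\mid\neg a,x_d,u_d,u_r)\,p(x_d\mid\neg a,u_d)\,dx_d$, still a functional of the recovered joint. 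Every factor on the right‑hand side is a functional of $p(u_d,u_r,a,x_d,x_r,y)$, so the counterfactual effect is recovered.

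\noindent\textbf{Main obstacle.} The delicate point --- where I would spend most of the argument --- is justifying that recovering the \emph{distribution} (even with the latents included) suffices for a \emph{counterfactual}, since in a general SCM counterfactuals depend on functional details beyond any distribution. It goes through here for three reasons: the query flips only the single node $a$, so no nested or cross‑world expectation of $y$ with itself ever appears; $y$ (and $x_d$) sit at the bottom of the graph, so their counterfactual contribution, once integrated against the abducted exogenous posterior, depends only on the mechanism conditional of that variable given its parents; and the disentanglement assumptions ($u_d\perp a$, with $u_r$ the sole carrier of the $a$--$x_r$ dependence) guarantee that no back‑door path from $a$ into $y$ remains open after conditioning on $(u_d,u_r)$, so the reused observational conditional coincides with the genuine structural mechanism. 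Verifying these three conditions against the exact edge set of Figure~\ref{fig:mine_bayes} is the real content; the displayed integral identity is then routine.
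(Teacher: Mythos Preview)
Your argument is correct but follows a genuinely different route from the paper's. The paper's two-line proof targets the conditional \emph{interventional} distribution $p(y\mid x_d,x_r,do(a{=}1))$ --- which is exactly what the paper defines as the counterfactual effect, $CE(a_1,a_0\mid o)=P(y_{a_1}\mid o)-P(y_{a_0}\mid o)$ --- and proceeds by marginalising in $(u_d,u_r)$ and then replacing $do(a{=}1)$ by $a{=}1$ in both factors, on the grounds that $(u_d,u_r)$ block all back-door paths from $a$ to $y$; identification follows because each remaining factor is a conditional of the recovered joint. You instead target the rung-3 quantity $P(y_{A\leftarrow\neg a}\mid a,x_d,x_r,y)$, conditioning also on the factual outcome, and run Pearl's abduction--action--prediction procedure explicitly. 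Your route is more faithful to the paper's own ``Counterfactual Inference and Counterfactual Example Generation'' subsection and, unlike the paper, engages the real subtlety of whether a joint \emph{distribution} (rather than the SCM functions) suffices for a counterfactual; the paper's route is shorter precisely because it stays at rung~2 and never confronts that issue. One small caveat on your ``main obstacle'' paragraph: the claim that ``no cross-world expectation of $y$ with itself ever appears'' is not quite right --- $P(Y_{\neg a}\mid Y{=}y,\dots)$ \emph{is} a cross-world query --- and your displayed integral is exact only because in the causal model of Figure~\ref{fig:mine_bayes} the pair $(u_d,u_r)$ exhausts the exogenous noise for $y$, so that $p(y\mid a,u_d,u_r)$ is degenerate; if $y$ carried residual noise (as the Bernoulli decoder in the VAE suggests), your formula would be the standard canonical approximation rather than the exact counterfactual.
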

\begin{proof} $p(y\mid x_d,x_r,do(a=1))$
\begin{equation}
\begin{aligned}
&=\int_{u_d}\int_{u_r}p\big(y\mid x_d,x_r,do(a=1),u_d,u_r\big) \nonumber\\
&~~~~~~~~~~~~~~~~\times p\big(u_d,u_r\mid x_d,x_r,do(a=1)\big)du_r du_d \\
&=\int_{u_d}\int_{u_r}p(y\mid x_d,x_r,a=1,u_d,u_r) \nonumber\\
&~~~~~~~~~~~~~~~~\times p(u_d,u_r\mid x_d,x_r,a=1)du_r du_d.\nonumber
\end{aligned}
\end{equation}
$p(y| x_d,x_r,a=1,u_d,u_r)$ and $p(u_d,u_r|x_d,x_r,a=1)$ can be identified from the distribution of $p(u_d,u_r, a,x_d,x_r,y)$.
\end{proof}

\subsection{Evidence Lower Bound of DCEVAE}
\noindent We propose the ELBO to disentangle $u_d$ and $u_r$ by following the Bayesian network structure, Figure \ref{fig:mine_bayes}. We assume that $x_d$ and $u_r$ is independent given $u_d$; and $x_r$ and ($a$, $u_d$) are independent given $u_r$, as well. Then, the decoder distribution, $p_{\theta}(x_d, x_r, y, u_d, u_r|a)$, can be factorized as the below:
\begin{align}\label{eq:prior}
p_{\theta}(x_d, x_r, y, u_d, u_r|a)&=p(u_d)p(u_r)p_{\theta}(x_d|a,u_d)\nonumber\\
& ~~~~ \times p_{\theta}(x_r|u_r) p_{\theta}(y|a,u_d,u_r)
\end{align}
Also, we assume that the posterior, $q_\phi(u_d,u_r|a,x_d,x_r,y)$, can be factorized as the Eq. \ref{eq:posterior}.
\begin{equation}
q_\phi(u_d,u_r|a,x_d,x_r,y)=q_\phi(u_d|a,x_d,y)q_\phi(u_r|a,x_r,y)
\label{eq:posterior}
\end{equation}
Given an approximate posterior $q_\phi(u_d,u_r|a,x_d,x_r,y)$, we obtain the variational lower bound as Eq. \ref{eq:ELBO_jensen}.
\begin{align}
    & \log p_{\theta}(x_d, x_r, y|a) \nonumber\\
    & \geq \mathbb{E}_{q_\phi(u_d|a,x_d,y)q_\phi(u_r|a,x_r,y)}\big[\log{p_\theta\left(y|a,u_d,u_r\right)}\big] \nonumber\\
    & +\mathbb{E}_{q_\phi(u_d|a,x_d,y)}\big[\log{p_\theta(x_d|a,u_d)}\big] \nonumber\\
&+ \mathbb{E}_{q_\phi(u_r|a,x_r,y)}\big[\log{p_\theta(x_r|u_r)}\big] \nonumber\\
    &+KL\big(q_\phi(u_d|a,x_d,y)||p(u_d)\big)+KL\big(q_\phi(u_r|a,x_r,y)||p(u_r)\big) \nonumber\\
	&=: \mathcal{M}_{\textit{ELBO}}
\label{eq:ELBO_jensen}
\end{align}
In practice, we use the neural network layers to infer the parameters of a Gaussian distribution over the joint space of $u_d$ and $u_r$. To obtain the posterior distribution of $q_\phi$, $p(u_d)$, and $p(u_r)$ are the prior distributions following the Gaussian distribution, and we utilize the reparametrization, accordingly. The below is the encoder structure for $u_d$ and $u_r$.
\begin{align}
p(u_d)=\mathcal{N}(u_d|0,I);& \quad p(u_r)=\mathcal{N}(u_r|0,I); \nonumber\\ 
q_\phi(u_d|a,x_d,y)=\mathcal{N}(\bar{\mu}_d,\bar{\sigma}_d^2I);& \quad q_\phi(u_r|a, x_r,y)=\mathcal{N}(\bar{\mu}_r,\bar{\sigma}_r^2I); \nonumber\\
\bar{\mu}_d=g^{\mu}_d(a,x_d,y);& \quad \bar{\sigma}_d=g^{\sigma}_d(a,x_d,y); \\ 
\bar{\mu}_r=g^{\mu}_r(a,x_r,y);& \quad \bar{\sigma}_r=g^{\sigma}_r(a,x_r,y), \nonumber
\end{align}

With decoder $p_\theta$, DCEVAE provides the generative process for $a$, $x_d$, $x_r$, and $y$. Since $a$, $x$, and $y$ are different in nature, we differentiate their distributions. We assume that if $x$ are continuous variables, they follow the Gaussian distribution. We let $y$ be the binary variable of the Bernoulli trial. $x_d$ and $y$ are influenced by $a$. On the other hand, $x_r$ is only determined by $u_r$ and not determined by $a$. The decoder structure for a factual data instance is specified as below:
\begin{align}
p_{\theta_d}(x_d|a,u_d,y)&=\mathcal{N}(\mu=\hat{\mu}_d,{\sigma}^2=\hat{\sigma}_d^2 I); \nonumber\\
p_{\theta_r}(x_r|u_r)&=\mathcal{N}(\mu=\hat{\mu}_r,{\sigma}^2=\hat{\sigma}_r^2 I);\\
\hat{\mu}_r, \hat{\sigma}_r^2=&f_r^{\mu}(u_r), f_r^{\sigma}(u_r)\nonumber\\
\hat{\mu}_d, \hat{\sigma}_d^2=&f_d^{\mu}(a,u_d), f_d^{\sigma}(a,u_d)\quad \nonumber\\
p_{\theta_y}(y|a,u_d,u_r)=Bern&(\pi=\hat{\pi}_y); \quad \hat{\pi}_y=f_{y}(a,u_d,u_r)\nonumber
\end{align}

For counterfactual data generation, we use the same encoder and decoder structure. The only difference is using $\neg a$ for a decoder output. As mentioned before, a counterfactual $x_r$ is same with the factual $x_r$, while a counterfactual $x_d$ and $y$ are influenced by the counterfactual $\neg a$.
\subsection{Disentanglement Loss of DCEVAE}
\label{sec: tc}
Up to this point, we treat each pair ($a$, $u_d$) and ($u_d$, $u_r$) to be disentangled. This assumption minimize $\mathcal{L}_{\textit{TC}} = KL\big(q(a, u_r, u_d)||q(a, u_r)q(u_d)\big)$. This KL divergence is intractable because both $q(a,u_d,u_r)$ and $q(a, u_r)q(u_d)$ are conditioned on $a$, $x_d$, and $x_r$. Therefore, we take an alternative approach adopted in FactorVAE \cite{kim2018disentangling}. Algorithm 1 in Appendix 3.3 specifies the sampling from $q(a,u_r,u_d)$ and $q(a, u_r)q(u_d)$ under the conditions, and we apply the permutation to minimize $\mathcal{L}_{\textit{TC}}$. $\mathcal{L}_{\textit{TC}}$ is related to the discriminator $D_{\psi}$ by the \textit{density-ratio trick}, which is approximated by a neural network. The output of $D_{\psi}([a,u_d,u_r])$ estimates the probability when the density function takes a sampled input from $q(a,u_d,u_r)$, rather than from $q(a,u_r)q(u_d)$. $\mathcal{L}_{\textit{TC}}$ is expressed by $D_{\psi}$ as in Eq. \ref{eq:TC_loss}.
\begin{align}\label{eq:TC_loss}
\mathcal{L}_{\textit{TC}}&=KL\big(q(a,u_d,u_r)\| q(a, u_r)q(u_d)\big)\nonumber\\
&\approx \mathbb{E}_{q(a,u_d,u_r)}\left[\log \frac{D_{\psi}(a,u_d,u_r)}{1-D_{\psi}(a,u_d,u_r)}\right].
\end{align}
For training the network $D_\psi$, we should maximize $\mathcal{M}_{D_{\psi}}$.
\begin{align}
\mathcal{M}_{D_{\psi}} &= \mathbb{E}_{q(a,u_d,u_r)}\big[\log(D_\psi([a,u_d, u_r]))\big]\nonumber\\
&+\mathbb{E}_{q(a,u_r)q(u_d)}\big[\log{1-D_\psi([a,u_d,u_r])}\big].
\end{align}
Appendix 3.3 provides a whole algorithm of DCEVAE including the minimization of $\mathcal{L}_{\textit{DCEVAE}}$ and the maximization phase of $\mathcal{M}_D$.

\subsection{Theoretic Analysis on Covariance Structure}
Eq.\ref{eq:pPCA} defines the posterior distribution of the latent variable, and we derive the stationary point of $\Sigma$. \cite{lucas2019don}.
\begin{equation}
\begin{aligned}
\label{eq:pPCA}
q(u|a,x,y) &= \mathcal{N}\big( V_u\left([a,x,y]-\mu \right), \Sigma \big) \\
\end{aligned}
\end{equation}
Here, $u$ is $[u_r, u_d]$, and $\Sigma$ is a covariance matrix of the joint distribution of latent variables. $\bar{\Sigma}$ is a covariance matrix of the permuted $u$ by discriminator $D_{\psi}$. 

If $[u_r, u_d]$ is well disentangled, $\Sigma$ should show two block diagonal matrices corresponding to $u_r$ and $u_d$. To theoretically analyze the disentangling, we provide Proposition \ref{proposition1} and Eq. \ref{eq: DCEVAE_cov}.
\begin{proposition}
\label{proposition1}
Let $\Sigma^*$ be the stationary point of the $\mathcal{L}_{\text{DCEVAE}}$. For a linear DCEVAE, $\Sigma^*$ has the form:
\begin{align}\label{eq: DCEVAE_cov}
&\Sigma^*=\Big\{\frac{1}{1+\beta}\big(\frac{1}{\sigma^2}W_r^TM_r^TW_r+\frac{1}{\sigma^2}W_d^TM_d^TW_d\\
&~~~~~+diag(\frac{1}{\sigma^2}W_y^TW_y)+I+\beta ({{\bar{\Sigma}}^{-1}})^{T}\big)\Big\}^{-1}\text{, where} \nonumber\\
& M_r = \begin{pmatrix} \small 
I_{n \times n} & 0_{n \times m} \nonumber\\ 
0_{m \times n} & 0_{m \times m} \nonumber
\end{pmatrix}, \nonumber
M_d = \begin{pmatrix} \small 
0_{n \times n} & 0_{n \times m} \nonumber\\ 
0_{m \times n} & I_{m \times m} \nonumber
\end{pmatrix}
\end{align}with $n=|u_r|$ and $m=|u_d|$ .
\end{proposition}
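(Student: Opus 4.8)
The plan is to specialize the linear‑VAE analysis of \citet{lucas2019don} to the DCEVAE objective, treating the shared posterior covariance $\Sigma$ of Eq.~\eqref{eq:pPCA} as the only free quantity and imposing the first‑order stationarity condition $\partial\mathcal{L}_{\textit{DCEVAE}}/\partial\Sigma=0$. First I would fix the linear‑Gaussian instantiation of the model: Gaussian decoders $p_\theta(x_r|u_r)=\mathcal{N}(W_r u_r+b_r,\sigma^2 I)$ and $p_\theta(x_d|a,u_d)=\mathcal{N}(W_d u_d+W_a a+b_d,\sigma^2 I)$, a linearized head $p_\theta(y|a,u_d,u_r)$ with latent weight $W_y$ on $[u_d,u_r]$, standard‑normal priors $p(u_d)=p(u_r)=\mathcal{N}(0,I)$, and the amortized posterior $q(u|a,x,y)=\mathcal{N}(V_u([a,x,y]-\mu),\Sigma)$ over the joint latent $u=[u_r,u_d]$. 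The matrices $M_r$ and $M_d$ in the statement are the orthogonal projections of $\mathbb{R}^{n+m}$ onto the $u_r$‑ and $u_d$‑coordinates, so that lifting each decoder's weights to the full latent space produces block‑supported Gram matrices.

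Next I would write the $\Sigma$‑dependence of every term of $\mathcal{L}_{\textit{DCEVAE}}=-\mathcal{M}_{\textit{ELBO}}+\beta_{tc}\mathcal{L}_{TC}$ explicitly. Each Gaussian reconstruction term $\mathbb{E}_q[\tfrac{1}{2\sigma^2}\|\text{target}-\text{decoder}(u)\|^2]$ splits into a $\Sigma$‑free mean‑squared part and a variance part $\tfrac{1}{2\sigma^2}\operatorname{tr}(C_\bullet\Sigma)$, where $C_\bullet$ is the Gram matrix of that decoder's latent weights lifted to the joint latent; because $x_r$ loads only on $u_r$ and $x_d$ only on $u_d$ (and the observed $a$ contributes nothing to the variance), these Gram matrices are exactly the block‑supported quantities written $W_r^TM_r^TW_r$ and $W_d^TM_d^TW_d$ in the statement. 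The scalar Bernoulli head $y$, under its mean‑field linearization (equivalently, a Gaussian/Laplace surrogate of the Bernoulli log‑likelihood), contributes only through $\operatorname{diag}(W_y^TW_y)$. The two KL‑to‑prior divergences in $-\mathcal{M}_{\textit{ELBO}}$ give $\tfrac12(\operatorname{tr}\Sigma+\|\mu_u\|^2-\dim u-\log\det\Sigma)$, i.e.\ an $I$ in the coefficient of $\operatorname{tr}(\cdot\,\Sigma)$ and a $-\tfrac12\log\det\Sigma$. Finally, writing $\mathcal{L}_{TC}$ as $KL(\mathcal{N}(\mu,\Sigma)\,\|\,\mathcal{N}(\bar\mu,\bar\Sigma))$ with $\bar\Sigma$ the covariance of the $D_\psi$‑permuted latents held fixed at the stationary point yields $\tfrac{\beta_{tc}}{2}(\operatorname{tr}(\bar\Sigma^{-1}\Sigma)+(\bar\mu-\mu)^T\bar\Sigma^{-1}(\bar\mu-\mu)-\dim u+\log\det\bar\Sigma-\log\det\Sigma)$, which contributes the term $\beta_{tc}(\bar\Sigma^{-1})^T$ to the coefficient and an additional $-\tfrac{\beta_{tc}}{2}\log\det\Sigma$.

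Collecting terms, $\mathcal{L}_{\textit{DCEVAE}}$ equals, up to $\Sigma$‑independent constants, $\tfrac12\operatorname{tr}(A\Sigma)-\tfrac{1+\beta_{tc}}{2}\log\det\Sigma$ with $A=\tfrac{1}{\sigma^2}(W_r^TM_r^TW_r+W_d^TM_d^TW_d+\operatorname{diag}(W_y^TW_y))+I+\beta_{tc}(\bar\Sigma^{-1})^T$. Using $\partial\operatorname{tr}(A\Sigma)/\partial\Sigma=A^T$ and $\partial\log\det\Sigma/\partial\Sigma=\Sigma^{-1}$ and setting the derivative to zero gives $(1+\beta_{tc})\Sigma^{-1}=A^T$, i.e.\ $\Sigma^*=\{\tfrac{1}{1+\beta_{tc}}A\}^{-1}$, which is Eq.~\eqref{eq: DCEVAE_cov} with $\beta=\beta_{tc}$ (the Gram and diagonal blocks being symmetric). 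Since $A$ is a sum of positive‑semidefinite Gram/diagonal blocks, the identity $I$, and the positive‑definite $\beta_{tc}\bar\Sigma^{-1}$, it is positive definite and the inverse is well defined.

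The step I expect to be the main obstacle is the treatment of the TC term. One must justify that at a stationary point $\bar\Sigma$ may be frozen when differentiating in $\Sigma$, even though it is the empirical covariance of dimension‑permuted encoder samples and therefore depends on $\Sigma$ through the aggregate posterior, and that the density‑ratio/adversarial surrogate of Eq.~\eqref{eq:TC_loss} is tight enough in the linear‑Gaussian regime that the closed‑form Gaussian KL is indeed the object being optimized. A secondary delicate point is making the $\operatorname{diag}(W_y^TW_y)$ contribution rigorous from the Bernoulli likelihood — this needs either the Gaussian surrogate or a restriction to a coordinatewise linear $y$‑head — together with the careful bookkeeping of lifting each decoder's Gram matrix into the correct block of the $(n+m)$‑dimensional joint latent via $M_r$ and $M_d$; both are routine but must be done consistently for the blocks to land where the statement places them.
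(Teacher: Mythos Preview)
Your proposal is correct and follows essentially the same route the paper takes. The paper defers the details to Appendix~1.1, but the surrounding text makes the intended argument clear: it sets up the linear encoder of Eq.~\eqref{eq:pPCA}, cites \citet{lucas2019don} for the stationary-point computation, and interprets $\bar\Sigma$ as the covariance of the discriminator-permuted latents---exactly the ingredients you use. Your derivation (write $\mathcal{L}_{\textit{DCEVAE}}$ as $\tfrac12\operatorname{tr}(A\Sigma)-\tfrac{1+\beta}{2}\log\det\Sigma$ plus $\Sigma$-independent terms, then solve $\partial/\partial\Sigma=0$) is the standard linear-VAE calculation, and the caveats you flag---freezing $\bar\Sigma$ when differentiating despite its indirect dependence on $\Sigma$, and replacing the Bernoulli head by a Gaussian surrogate to obtain the $\operatorname{diag}(\tfrac{1}{\sigma^2}W_y^TW_y)$ term---are precisely the modeling choices the paper implicitly adopts in its linear-DCEVAE idealization.
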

\begin{proof}
see Appendix 1.1 for the proof.
\end{proof}
We observed that $u_r$ and $u_d$ has their distinct covariance blocks from the real dataset due to the masking effect of $M_r$ and $M_d$. This theoretically shows the disentangling effects of DCEVAE. Despite the clear disentangling effect from the masks, the off-diagonal covariance can be feasible by $\bar{\Sigma}$, so we provide Corollary \ref{corollary1}. Also, the part of covariance, ${M_d}^T\bar{\Sigma}$, is constructed to be independent to $a$ by the Total Correlation loss, $\mathcal{L}_{\textit{TC}}$, which enforces $u_d$ to be independent to $a$.
\begin{corollary}
\label{corollary1}
As $\beta \rightarrow \infty$, $\Sigma^*$ becomes a covariance matrix with two blocks on diagonal.
\end{corollary}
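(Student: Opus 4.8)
The plan is to take the limit $\beta\to\infty$ directly inside the closed-form expression of Proposition~\ref{proposition1} and then read off the block structure from the definition of $\bar\Sigma$.

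\textbf{Step 1 (isolate the $\beta$-dependence).} Collect the $\beta$-free part of the bracket in Eq.~\ref{eq: DCEVAE_cov} into a single matrix $A := \tfrac{1}{\sigma^2}W_r^TM_r^TW_r + \tfrac{1}{\sigma^2}W_d^TM_d^TW_d + \mathrm{diag}\!\big(\tfrac{1}{\sigma^2}W_y^TW_y\big) + I$. Then Eq.~\ref{eq: DCEVAE_cov} reads $\Sigma^* = (1+\beta)\big(A + \beta(\bar\Sigma^{-1})^T\big)^{-1}$, which I rewrite, pulling $\beta$ out of the inverted matrix, as $\Sigma^* = \tfrac{1+\beta}{\beta}\big(\tfrac{1}{\beta}A + (\bar\Sigma^{-1})^T\big)^{-1}$.

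\textbf{Step 2 (pass to the limit).} The matrix $(\bar\Sigma^{-1})^T$ is fixed and invertible (it is the inverse of a nondegenerate covariance), so for $\beta$ large the matrix $\tfrac{1}{\beta}A + (\bar\Sigma^{-1})^T$ stays invertible and converges to $(\bar\Sigma^{-1})^T$; by continuity of matrix inversion on the open set of invertible matrices, $\big(\tfrac{1}{\beta}A + (\bar\Sigma^{-1})^T\big)^{-1} \to \big((\bar\Sigma^{-1})^T\big)^{-1} = \bar\Sigma^T = \bar\Sigma$, the last equality using symmetry of covariance matrices. Since $\tfrac{1+\beta}{\beta}\to 1$, this yields $\Sigma^* \to \bar\Sigma$ as $\beta\to\infty$.

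\textbf{Step 3 (block structure of $\bar\Sigma$).} By construction $\bar\Sigma$ is the covariance of the latent $[u_r,u_d]$ under the permuted distribution $q(a,u_r)q(u_d)$ used for the total-correlation estimator, i.e.\ the FactorVAE-style shuffle that resamples the $u_d$ coordinates across the minibatch independently of $(a,u_r)$. This shuffle preserves the $u_r$ and $u_d$ marginal covariance blocks but sets the cross-covariance $\mathrm{Cov}(u_r,u_d)$ to zero; hence $\bar\Sigma$ is block diagonal with respect to the $[u_r,u_d]$ partition. Combined with Step~2, $\Sigma^*$ converges to a covariance matrix with two diagonal blocks, which is the corollary. (The limit is moreover self-consistent: at $\beta=\infty$ one has $\Sigma^*=\bar\Sigma$, and $\bar\Sigma$ is precisely the block-diagonal truncation of $\Sigma^*$.)

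\textbf{Main obstacle.} The delicate point is that $\bar\Sigma$ is itself defined through the permuted posterior and thus, strictly speaking, depends on $\Sigma^*$ and hence on $\beta$, so Step~2 is a limit of a family of matrices rather than of a single one. I would close this gap by treating Eq.~\ref{eq: DCEVAE_cov} as a fixed-point identity: for any family of stationary points $\Sigma^*_\beta$, the permuted covariances $\bar\Sigma_\beta$ inherit the marginal blocks of $\Sigma^*_\beta$ and are therefore bounded whenever the $\Sigma^*_\beta$ are, so the same computation gives $\Sigma^*_\beta - \bar\Sigma_\beta \to 0$; since every $\bar\Sigma_\beta$ is block diagonal, every limit point of $\Sigma^*_\beta$ is block diagonal. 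If, as in the intended reading of Proposition~\ref{proposition1}, $\bar\Sigma$ is regarded as a fixed given matrix, Steps~1--3 apply verbatim and the statement is immediate.
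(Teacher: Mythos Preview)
Your argument is correct and follows the same route as the paper: rewrite $\Sigma^*$ as $\big\{\tfrac{1}{1+\beta}A + \tfrac{\beta}{1+\beta}(\bar\Sigma^{-1})^T\big\}^{-1}$, send $\beta\to\infty$ to obtain $\Sigma^*\to\bar\Sigma$, and then invoke the permutation construction of $\bar\Sigma$ to conclude block-diagonality. The paper's proof is essentially your Steps~1--3 compressed into two lines; your additional discussion of the possible $\beta$-dependence of $\bar\Sigma$ addresses a subtlety the paper leaves implicit.
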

\begin{proof}
$\beta \rightarrow \infty \Rightarrow \frac{1}{1+\beta} \rightarrow 0, \frac{\beta}{1+\beta} \rightarrow 1 \Rightarrow \Sigma^* \rightarrow \bar{\Sigma}$.
Also, $\bar{\Sigma}$ is designed to permute $u_d$ and $u_r$ by following Line 8-11, Algorithm 1, Appendix 3.3; $cor(u_d,u_r) \rightarrow 0$. Therefore, $\Sigma^*$ has two blocks of $u_d$ and $u_r$ dimensions.
\end{proof}
Figure \ref{fig:cov0_cevae} and \ref{fig:cov1_cevae} contrast the covariance structure of DCEVAE to the CEVAE. There is no disentanglement effect on the covariance of CEVAE, so it cannot distinguish the correlated latent variable from the caused ones. Theoretic analyses on the covariance of CEVAE are in Appendix 1.2.
\begin{figure}[ht]
\begin{subfigure}[b]{.115\textwidth}
  \centering
  \includegraphics[width=0.98\linewidth]{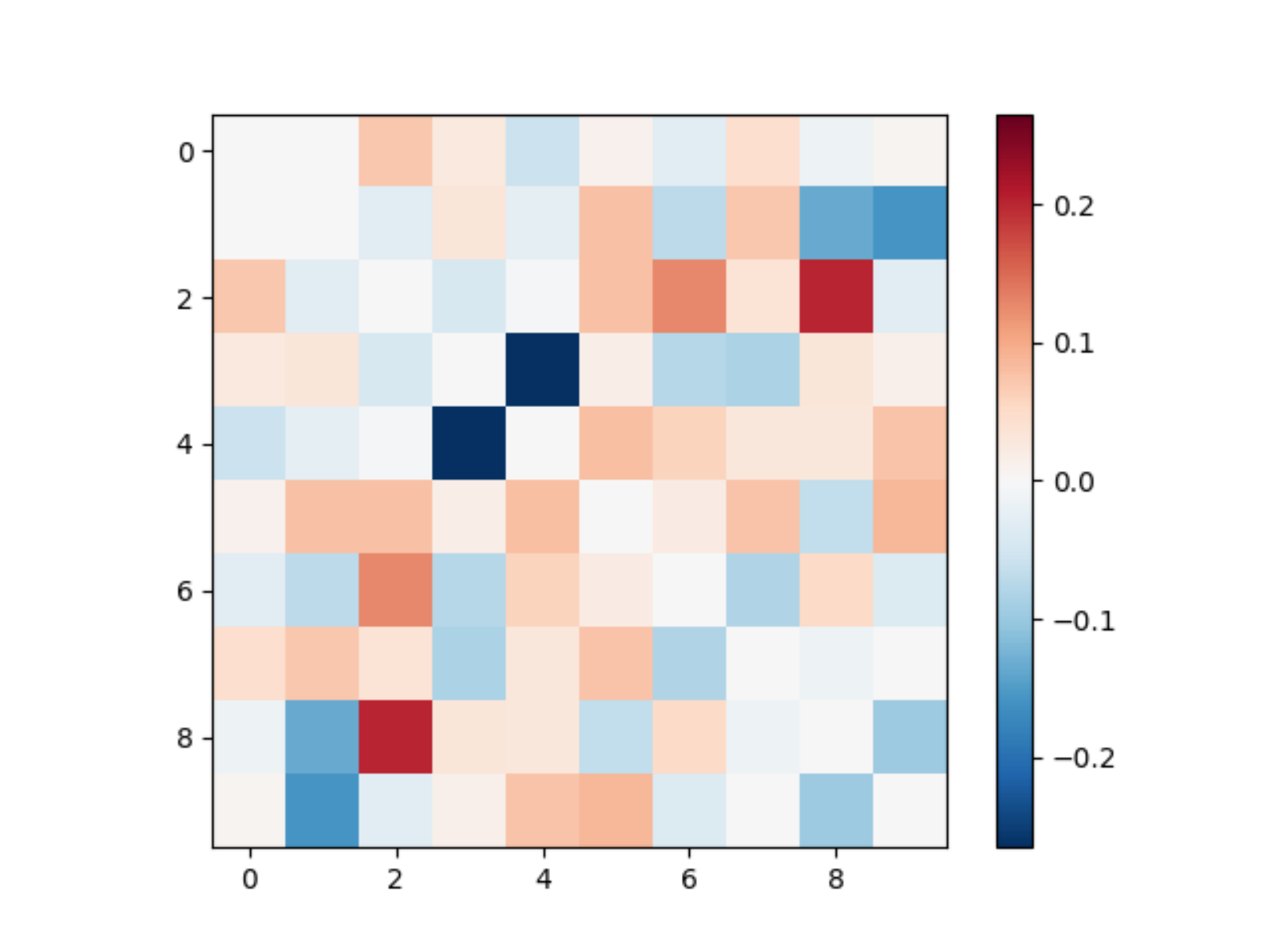}
  \caption{$\beta=0$}
\end{subfigure}
\begin{subfigure}[b]{.115\textwidth}
  \centering
  \includegraphics[width=0.98\linewidth]{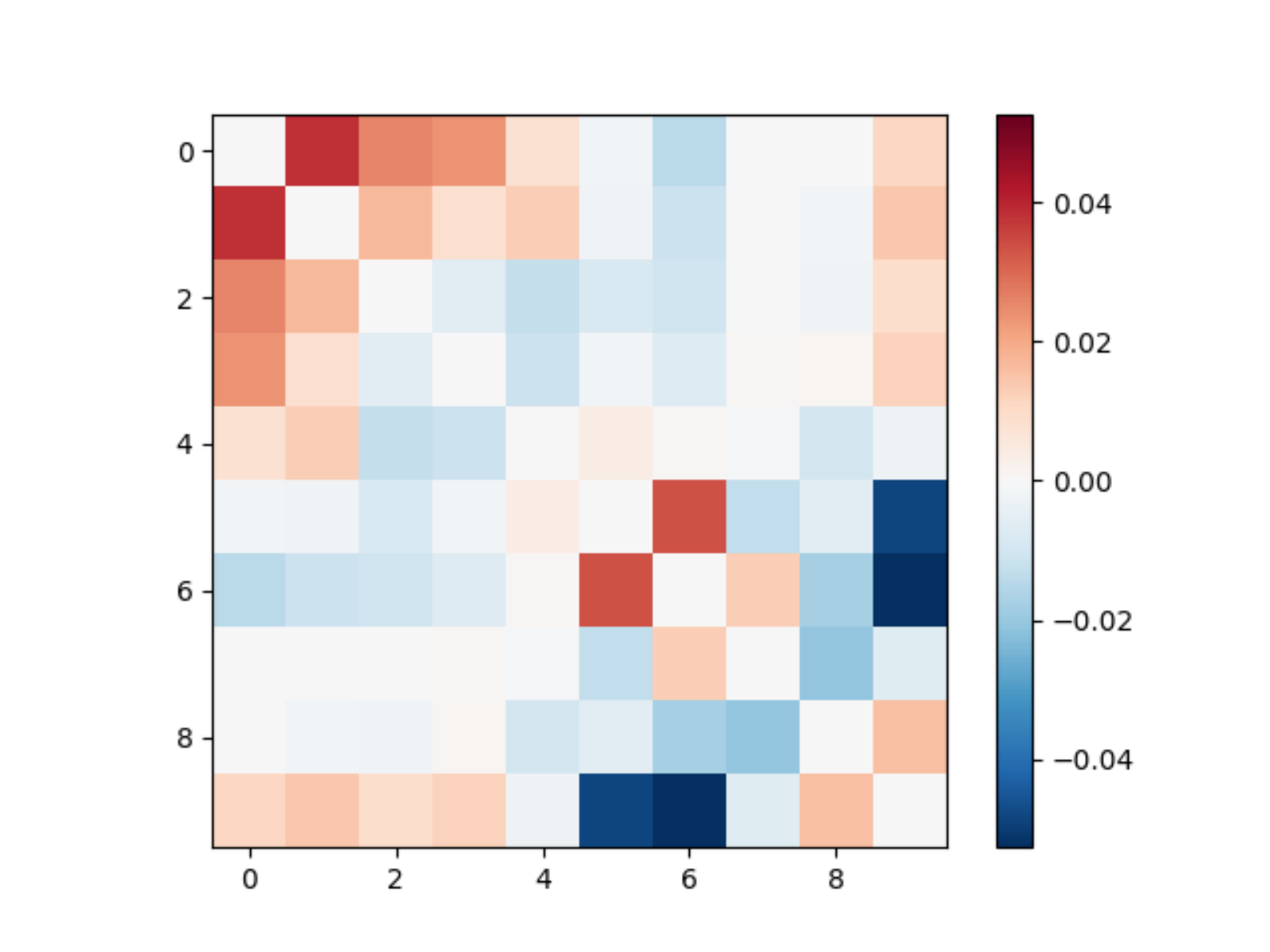}
  \caption{$\beta=1000$}
\end{subfigure}
\begin{subfigure}[b]{.115\textwidth}
  \centering
  \includegraphics[width=0.98\linewidth]{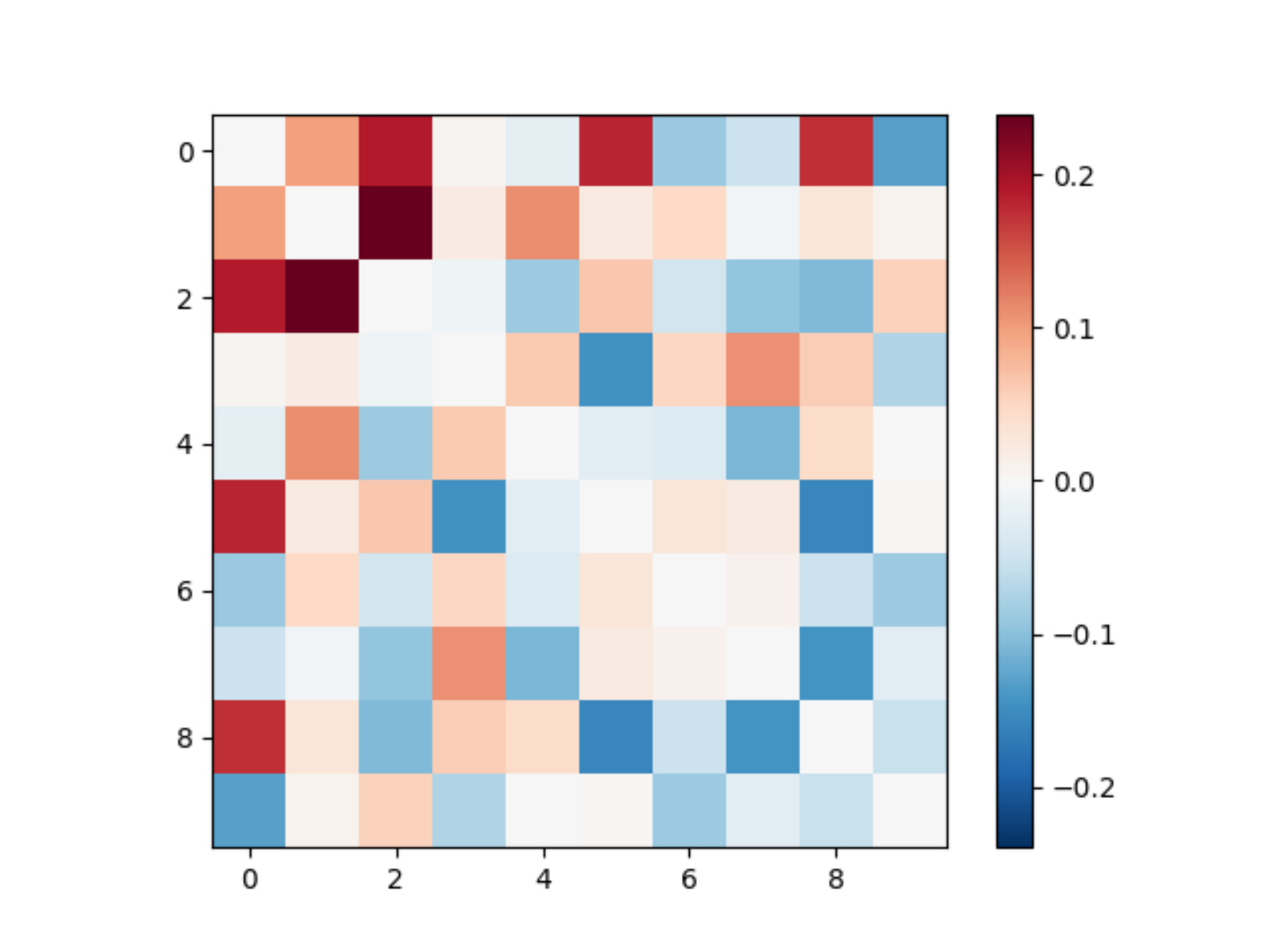}
  \caption{$\Sigma_0$}
\label{fig:cov0_cevae}
\end{subfigure}
\begin{subfigure}[b]{.115\textwidth}
  \centering
  \includegraphics[width=0.98\linewidth]{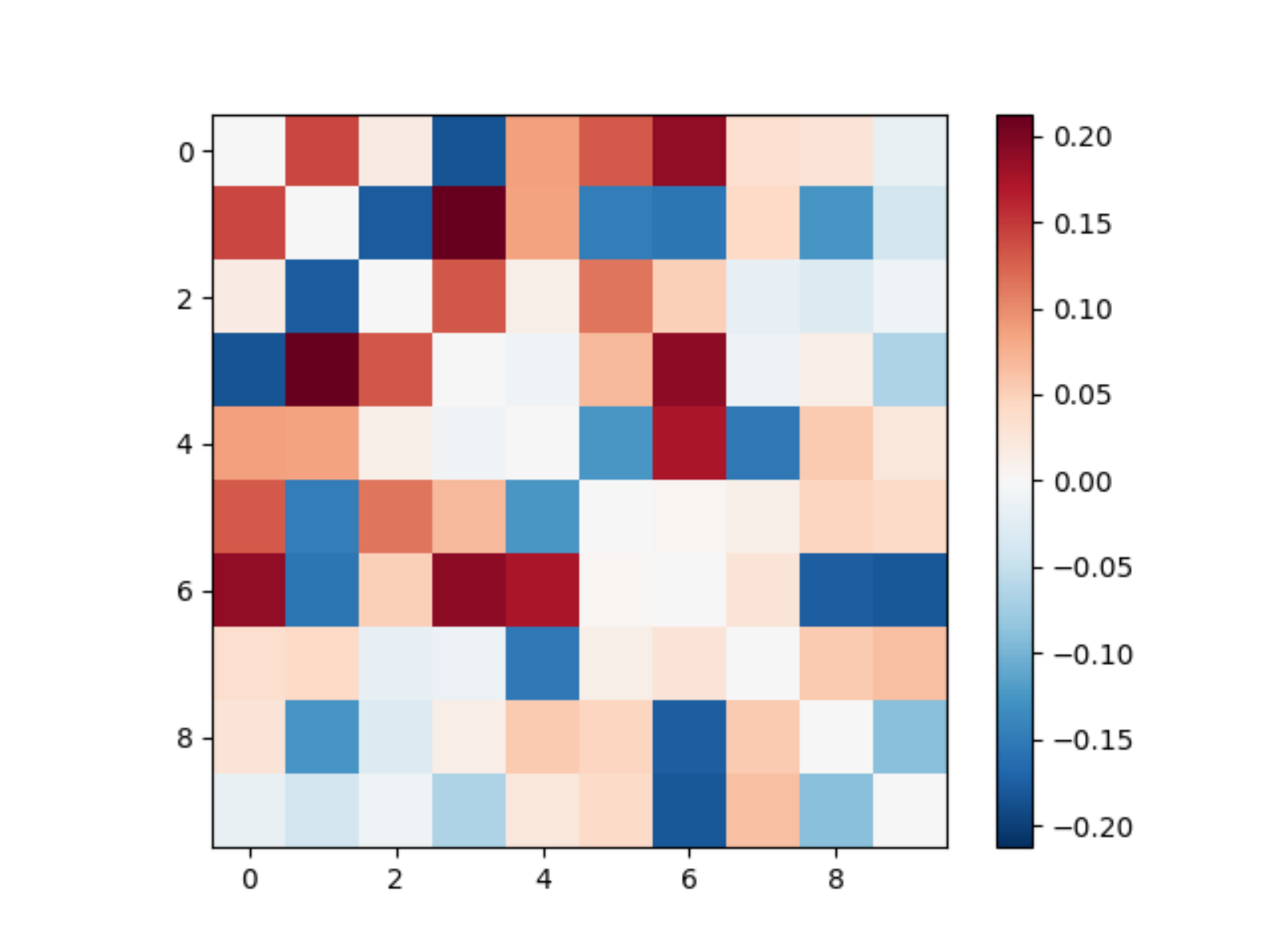}
  \caption{$\Sigma_1$}
\label{fig:cov1_cevae}
\end{subfigure}
\caption{(a,b) The covariance matrix of sampled latent values from DCEVAE with $\beta=0$ and $1000$. The dimension of $u_d$ is five, and so is $u_r$. (c,d) Each covariance matrix $\Sigma_0$ and $\Sigma_1$ of sampled latent values from each decoder of CEVAE.}
\end{figure}

\subsection{Application}
\begin{table*}[ht]{\small
\renewcommand{\tabcolsep}{0.1mm}
  \begin{tabular}{c|c|cccc|c|c|cc}
	\toprule
          &\multirow{2}{*}{Total Effect}
          &\multicolumn{4}{c|}{Counterfactual Effect (CE)}  &\multirow{2}{*}{CE error} &\multirow{2}{*}{${\chi^2}$}
          &\multicolumn{2}{c}{Accuracy} \\
        \cmidrule{3-6}
	\cmidrule{9-10}
              &     &$o_{00}$   &$o_{01}$   &$o_{10}$	&$o_{11}$	&		&{\tiny $(\times 10^{-2})$}	&LR	&SVM\\
    \midrule
    Real Data   &.1936 &.1785 &.1266 &.1293 &.2023	&-	&0  &.8158 &.8109\\
    \midrule
    CausalGAN 	       &$.1959_{\pm.0140}$	         &$.1123_{\pm.0231}$		&$.1437_{\pm.0128}$	&$.1690_{\pm.0144}$	&$.2080_{\pm.0163}$	&$.1374_{\pm.0267}$	&$4.20_{\pm.08}$	&$.7805_{\pm.0220}$	&$.8126_{\pm.0026}$\\
    \midrule
	\midrule
    CausalGAN-IC    &$.2432_{\pm.0163}$				&$\underline{.1585_{\pm.0622}}$		&$.1912_{\pm.0347}$	&$.2267_{\pm.0373}$	&$.2556_{\pm.0162}$	&$.2689_{\pm.0663}$	&$4.65_{\pm.21}$	&$\underline{.7994_{\pm.0141}}$	&$\mathbf{.8049_{\pm.0129}}$\\	
    CVAE   		       &$\underline{.1510_{\pm.0279}}$	     		    &$.1228_{\pm.0177}$		&$\mathbf{.1194_{\pm.0241}}$	&$\underline{.1347_{\pm.0264}}$	&$.1452_{\pm.0295}$	&$\underline{.1524_{\pm.0618}}$	&$\underline{0.25_{\pm.01}}$	&$.7945_{\pm.0040}$	&$.7886_{\pm.0064}$\\
    CEVAE   	       &$.0954_{\pm.0036}$			&$.0951_{\pm.0066}$		&$.0843_{\pm.0017}$	&$.0921_{\pm.0044}$	&$.0972_{\pm.0041}$	&$.3399_{\pm.1664}$	&$\mathbf{0.18_{\pm.01}}$	&$.7909_{\pm.0014}$	&$.7615_{\pm.0000}$	\\
    mCEVAE  	       &$.1480_{\pm.0160}$ 				&$.1296_{\pm.0858}$		&$\underline{.1166_{\pm.0422}}$	&$\mathbf{.1340_{\pm.0431}}$	&$\underline{.1543_{\pm.0164}}$	&$.1790_{\pm.1190}$	&$0.26_{\pm.03}$	&$.7877_{\pm.0199}$	&$.7884_{\pm.0246}$\\
    DCEVAE (ours)    &$\mathbf{.1831_{\pm.0011}}$ 		&$\mathbf{.1871_{\pm.0086}}$		&$.1574_{\pm.0054}$	&$.1673_{\pm.0068}$	&$\mathbf{.1878_{\pm.0010}}$	&$\mathbf{.0923_{\pm.0071}}$	&$0.27_{\pm.01}$	&$\mathbf{.8095_{\pm.0050}}$	&$\underline{.7995_{\pm.0003}}$	\\
	\bottomrule
  \end{tabular}}
	\caption{The total effect and counterfactual effect of real and generated datasets ($O=\{\text{race, native country}\}$). CE error is $\sum_{i,j\in\{0,1\}}|\frac{o_{ij}-o^*_{ij}}{4}|$ with true CE, $o^*$. The numbers in bold indicates the best performance, and the underlined numbers indicate the second best performance.}
	\label{table:causal}
\end{table*}

\noindent \textbf{Causal Fair Classification}
\label{sec: fair}
The task of causal fairness requires estimating $y$ with the minimized influence of $a$.
When we assume $p(\hat{y}|a, u_{d,i}, u_{r,i})-p(\hat{y}|\neg a, u_{d,i}, u_{r,i})=0$ for $u_{d,i}$ and $u_{r,i}$ from $i$-th data instance; we say that the counterfactual fairness is satisfied for the data instance. Therefore, we alter the objective function of DCEVAE by adding a regularization $\mathcal{L}_{f}$ as the below:
\begin{align}\label{eq:fair_loss}
&\min{\mathcal{L}_{\textit{fair}}} = \mathcal{L}_{\textit{DCEVAE}} + \beta_f \mathcal{L}_{f}, ~
\max{\mathcal{M}_{D}} \hspace{.3em} \text{where} \\
&\mathcal{L}_{f}=\mathbb{E}_{q(u_d, u_r|a,x_d,x_r)}\big[||{p_\theta(y|a, u_d, u_r)-p_\theta(y|\neg{a}, u_d, u_r)}||_2\big] \nonumber
\end{align}
After optimizing Eq. \ref{eq:fair_loss}, we train a classifier, such as a logistic regression (LR), with the pairs of $\hat{y}$ with $\hat{x}$ through the decoder, $p_\theta(x_d, x_r, y|a, u_d, u_r)$ and $p_\theta(x_d, x_r, y|\neg a, u_d, u_r)$. The test procedure utilizes the raw input of the testing feature $a$ and $x$ without any modifications.

\noindent \textbf{Counterfactual Image Generation} The counterfactual image generation task allocates $y$ to be the image and $x$ to be the labels which describe the image. $a$ is the label which we want to intervene on. Unlike the fairness dataset, we modified the encoder, $q_\phi(u_d,u_r|a,y)$, because the information of $x$ is already embedded on an image, $y$. The counterfactual images are sampled from the decoder, $p_\theta(y|\neg a, u_d, u_r)$, while $u_d$ and $u_r$ are obtained from the encoder.
\section{Experiments}
\subsection{Datasets and Baselines}
Appendix 4.1 provides the details of datasets; and Appendix 4.2 enumerates the causal graphs and their paired attributes, $x_d$ and $x_r$.\\
\noindent \textbf{Causal Estimation and Fair Classification} We use the UCI Adult income \cite{asuncion2007uci} dataset for causal estimation and fair classification tasks. We treat \textit{gender} as a sensitive varible (or intervention) $a$; \textit{income} as the outcome $y$; \textit{race}, \textit{age}, and \textit{native country} as  $x_r$; and otehr variables as $x_d$.
We benchmark the causal effect estimation on five baselines: CausalGAN \cite{kocaoglu2017causalgan}, CausalGAN-Incomplete (CausalGAN-IC), conditional VAE (CVAE) \cite{sohn2015learning}, CEVAE, and mCEVAE . CausalGAN-IC has the same generator and discriminator structures as CausalGAN, but CausalGAN-IC used the same causal graph DCEVAE used. For fairness experiments, we additionally use Unawareness (CF-UA), Additive Noise (CF-AN) \cite{kusner2017counterfactual}, and CFGAN \cite{xu2019achieving} as baselines.

\noindent\textbf{Counterfactual Image Generation} We use the CelebA dataset \cite{liu2018large} for the counterfactual image generation. We treat \textit{Mustache} as an intervention attribute $a$; an image as $y$; and the other attributes as $x$. Appendix 5.1 provide a similar experiment with \textit{Smiling} as an intervention attribute. We seperate $x_d$ and $x_r$ as the assumed causal graph in Figure \ref{fig:overall}. We consider the following baselines for counterfactual image generations: CVAE , CEVAE, mCEVAE, Conditional GAN (CGAN) \cite{mirza2014conditional} with Wasserstein distance \cite{arjovsky2017wasserstein} (denoted as CWGAN), CausalGAN, and CausalGAN-IC.

\subsection{Evaluation Metrics}
\noindent We evaluate the performance on the causal estimation and fairness task with following metrics used in CFGAN: \textbf{Total effect}, $TE(a_1, a_0)=P(y_{a_1})-P(y_{a_0})$, measures the change likelihood of $A$ from $a_0$ to $a_1$ on $Y$. \textbf{Counterfactual effect}, $CE(a_1, a_0|o)=P(y_{a_1}|o)-P(y_{a_0}|o)$, is the total effect conditioned on the observation, $o$. A \textbf{Logistic Regression (LR)} and a \textbf{Support Vector Machine (SVM)} are trained with generated datasets from the model, and their test accuracy is also used from the original dataset. \textbf{Chi square distance ($\chi^2$)} indicates the similarity between the generated and the real datasets \cite{daliri2013chi}.

We follows MaskGAN \cite{lee2019maskgan} to evaluate quality of counterfactually generated images:
\textbf{Semantic-level Evaluation} evaluates generated images by measuring the preservation of the original values of $x_r$. We trained a classifier with ResNet-18 \cite{he2016deep} to examine the accuracy of $x_r$ in generated images. \textbf{Distribution-level Evaluation} measures the quality and the diversity of generated images, and we used the Frechet Inception Distance (FID) \cite{cao2013facewarehouse}. \textbf{Identity Preserving Evaluation} evaluates the identity preservation ability, so we conducted a face matching experiment for whole pairs of original and counterfactual images with ArcFace \cite{deng2019arcface}. We use $1,544$ pairs for \textit{Mustache} and $10,000$ pairs for \textit{Smiling}.
\section{Results}
\subsection{Causal Estimation and Fairness Task}
Without the regularization of the fairness $\mathcal{L}_{\textit{fair}}$, we calculate the total effect (TE) and the counterfactual effect (CE) of our model and baselines in Table \ref{table:causal} for UCI Adult. DCEVAE estimates the total effect and the counterfactual effect close to the original dataset through DCEVAE does not know the exact causal graph structure, unlike CausalGAN. CausalGAN estimates the true TE and CE well, but CausalGAN-IC has lower CE and TE estimation accuracies when the incomplete causal graph is given.
CEVAE has lower performance on TE and CE estimations compared to DCEVAE, which is caused by the latent variable in CEVAE with the correlated information from $a$ to $X_d$. Therefore, CEVAE maintains correlated information even in the counterfactual prediction.

\begin{figure}[ht]
  \centering
  \includegraphics[width=.80\linewidth]{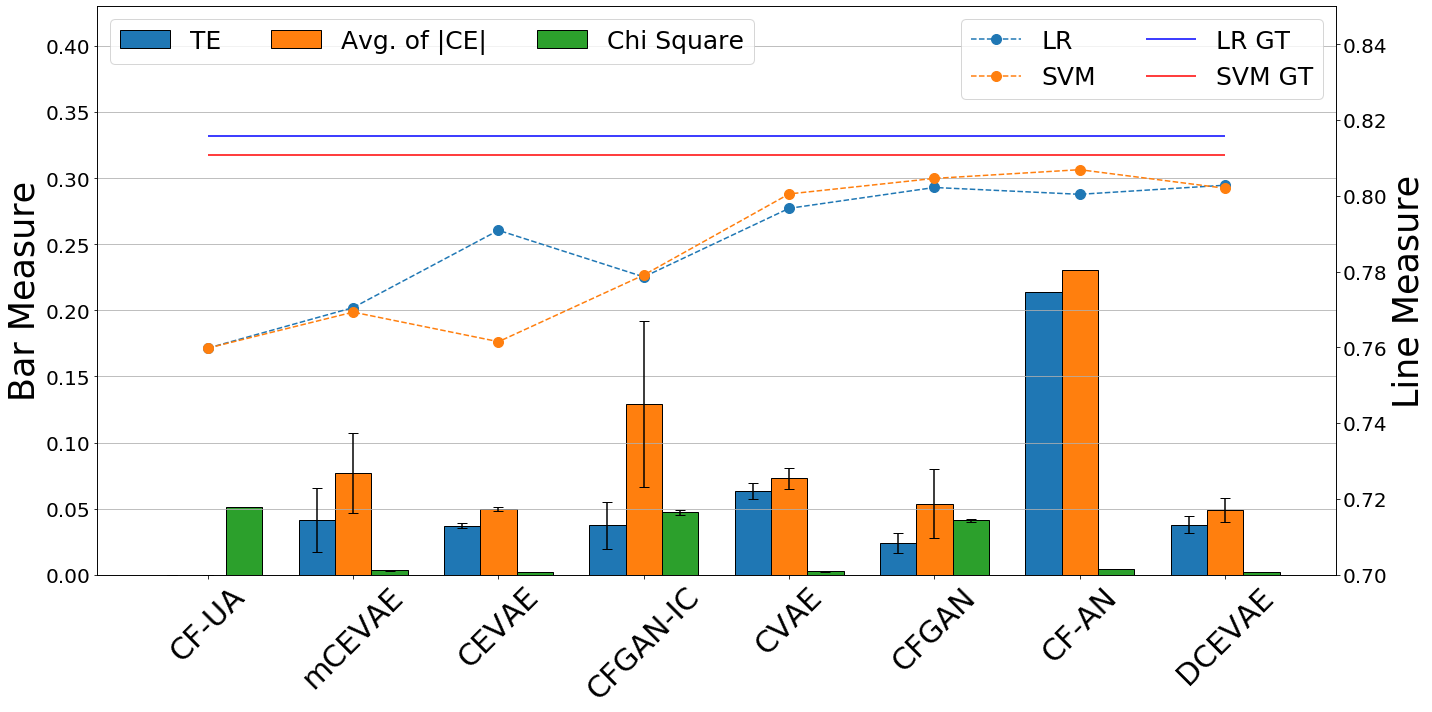}  
  \caption{(Left axis) TE, Average of the absolute value of CE, and $\chi^2$ (Chi-Square), (Right axis) Accuracy of LR and SVM For fairly generated dataset. GT is ground truth.}
  \label{fig:fair_bar_chart}
\end{figure}
With the regularization of the fairness, $\mathcal{L}_{\textit{fair}}$, Fig \ref{fig:fair_bar_chart} shows the results of fairness tasks for UCI Adult. The training datasets from CFGAN have the highest accuracy when LR and SVM are selected as classifiers, and CFGAN has a low value of TE and the average of CE. However, when an incomplete graph is given, CFGAN-IC has the low accuracy of LR and SVM, and the high values of TE and CE. In contrast to the low reliability of CFGAN, DCEVAE has comparable accuracies, TE and CE to CFGAN without causal graph structure. Also, the variance of TE and CE from DCEVAE shows the reliability compared to DCEVAE.

\begin{table*}[ht]
\renewcommand{\tabcolsep}{0.7mm}
\begin{center}{\small
\adjustbox{width=0.99\textwidth}{
  \begin{tabular}{c|c|c|cccccccc|c|c}
    \toprule
   \multirow{2}{*}{}	&\multirow{2}{*}{Model}   &\multirow{2}{*}{Target}	&\multicolumn{8}{c|}{Attribute clssification accuracy (\%)}  &\multirow{2}{*}{IP} &\multirow{2}{*}{FID}	\\
	\cmidrule{4-11}
	&	&	&WL	&MSO	&S	&ML	&E	&B	&NE	&Y	&	&	\\
	\midrule
	\multirow{4}{*}{Real}	
&CVAE		&$46.14_{\pm3.89}$	                  &$85.36_{\pm0.35}$	                  	&$73.72_{\pm0.70}$	&$\underline{80.51_{\pm1.18}}$	&$76.49_{\pm2.61}$	&$65.47_{\pm2.87}$	&$78.58_{\pm3.34}$	&$62.27_{\pm2.92}$	&$\underline{65.16_{\pm2.82}}$	&$0.26_{\pm0.04}$	&$187.88_{\pm6.09}$\\
&CEVAE		&$\mathbf{50.44_{\pm0.52}}$	&$85.27_{\pm0.22}$	                  	&$\underline{74.61_{\pm0.20}}$	&$79.86_{\pm0.51}$	&$\underline{77.01_{\pm2.14}}$	&$\underline{69.03_{\pm4.32}}$	&$\underline{82.05_{\pm1.10}}$	&$\underline{65.14_{\pm0.70}}$	&$64.48_{\pm2.04}$	&$\underline{0.29_{\pm0.04}}$&$181.29_{\pm8.57}$\\
&mCEVAE	&$44.26_{\pm4.51}$	                  &$\underline{85.38_{\pm0.27}}$	                  	&$73.41_{\pm1.33}$	&$79.81_{\pm1.11}$	&$76.63_{\pm1.54}$	&$59.88_{\pm1.83}$	&$79.16_{\pm0.80}$	&$63.30_{\pm4.24}$	&$65.04_{\pm1.87}$	&$0.23_{\pm0.02}$&$\mathbf{175.30_{\pm1.26}}$\\
&DCEVAE	&$\underline{49.68_{\pm0.18}}$	                  &$\mathbf{85.60_{\pm0.58}}$	&$\mathbf{74.75_{\pm0.42}}$	&$\mathbf{81.52_{\pm0.23}}$	&$\mathbf{78.89_{\pm0.53}}$	&$\mathbf{71.32_{\pm2.27}}$	&$\mathbf{82.09_{\pm0.17}}$	&$\mathbf{66.32_{\pm0.52}}$	&$\mathbf{67.72_{\pm0.85}}$	&$\mathbf{0.33_{\pm0.02}}$&$\underline{176.55_{\pm0.78}}$\\
	\midrule
	\midrule
	\multirow{9}{*}{Pair}	
	
&CVAE		&$48.54_{\pm1.96}$	                  &$93.45_{\pm2.23}$	                  	&$95.05_{\pm2.65}$	&$92.31_{\pm1.88}$	&$85.83_{\pm2.44}$	&$75.18_{\pm6.98}$	&$86.01_{\pm6.14}$	&$75.34_{\pm4.19}$	&$77.32_{\pm9.24}$	&$0.42_{\pm0.17}$&$187.88_{\pm6.09}$\\
&CEVAE		&$50.36_{\pm0.28}$	                  &$\mathbf{96.65_{\pm1.38}}$	&$\mathbf{95.40_{\pm1.98}}$	&$\underline{92.84_{\pm3.39}}$	&$\underline{88.98_{\pm4.94}}$	&$\underline{88.25_{\pm1.41}}$	&$\underline{93.82_{\pm1.07}}$	&$\underline{86.41_{\pm3.22}}$	&$\underline{85.57_{\pm3.61}}$	&$\underline{0.75_{\pm0.07}}$&$181.29_{\pm8.57}$\\
&mCEVAE	&$47.49_{\pm2.34}$	                  &$93.70_{\pm1.37}$	                  	&$94.65_{\pm3.10}$	&$92.14_{\pm1.15}$	&$84.65_{\pm2.15}$	&$65.75_{\pm1.95}$	&$87.07_{\pm3.48}$	&$77.90_{\pm8.54}$	&$78.73_{\pm4.90}$	&$0.43_{\pm0.21}$&$175.30_{\pm1.26}$\\
&DCEVAE	&$50.05_{\pm0.21}$	                  &$93.81_{\pm1.25}$	                  	&$\underline{95.14_{\pm1.06}}$	&$\mathbf{94.68_{\pm0.89}}$	&$\mathbf{93.89_{\pm0.73}}$	&$\mathbf{91.98_{\pm1.75}}$	&$\mathbf{95.13_{\pm0.72}}$	&$\mathbf{91.20_{\pm0.78}}$	&$\mathbf{88.77_{\pm1.20}}$	&$\mathbf{0.98_{\pm0.01}}$&$176.55_{\pm0.78}$\\
\cmidrule{2-13}
&CWGAN	&$\mathbf{55.92_{\pm1.57}}$	&$93.87_{\pm1.01}$	                  &$75.13_{\pm2.22}$	&$85.45_{\pm1.0}$	&$62.45_{\pm1.17}$	&$85.57_{\pm0.91}$	&$81.31_{\pm2.32}$	&$69.25_{\pm0.89}$	&$65.91_{\pm1.24}$	&$0.11_{\pm0.01}$&$\mathbf{106.14_{\pm1.16}}$\\
&DCGAN		&$51.16_{\pm0.37}$	                  &$81.50_{\pm5.69}$                        &$66.61_{\pm3.49}$	&$75.62_{\pm6.49}$	&$63.45_{\pm3.61}$	&$78.11_{\pm3.76}$	&$88.59_{\pm3.33}$	&$72.24_{\pm3.64}$	&$66.87_{\pm2.76}$	&$0.13_{\pm0.02}$&$141.94_{\pm3.60}$\\
&BEGAN		&$50.93_{\pm1.83}$	                  &$\underline{94.11_{\pm3.07}}$                        &$73.80_{\pm6.67}$	&$88.99_{\pm5.72}$	&$69.22_{\pm8.08}$	&$82.36_{\pm2.74}$	&$83.60_{\pm9.12}$	&$76.36_{\pm3.32}$	&$72.47_{\pm6.86}$	&$0.30_{\pm0.35}$&$192.11_{\pm93.46}$\\
&DCGAN-IC	&$50.93_{\pm0.97}$	                  &$80.23_{\pm3.25}$	                &$62.66_{\pm2.89}$	&$70.98_{\pm1.85}$	&$63.94_{\pm3.03}$	&$75.08_{\pm5.86}$	&$89.59_{\pm2.66}$	&$72.84_{\pm4.37}$	&$68.19_{\pm2.32}$	&$0.09_{\pm0.01}$&$\underline{139.80_{\pm2.72}}$\\
&BEGAN-IC	&$\underline{51.58_{\pm1.11}}$	                  &$89.91_{\pm1.74}$	                &$60.61_{\pm1.43}$	&$82.12_{\pm3.21}$	&$62.18_{\pm1.62}$	&$75.92_{\pm3.21}$	&$86.15_{\pm3.73}$	&$71.76_{\pm2.35}$	&$66.44_{\pm2.11}$	&$0.08_{\pm0.02}$&$145.9_{\pm2.89}$\\
	\bottomrule
 \end{tabular}}}
\end{center}
\caption{(1) Real: comparison between real images and generated counterfactual examples; Pair: comparison between reconstructed images and counterfactual images with the same latent values (2) Target Accuracy, (3) Identity Preserving (IP) scores (threshold$=0.2$ for Real, and $0.6$ for Pair), (4) The attribute classification accuracy for unaltered labels, (5) FID Score (FID). The numbers in bold indicate the best performance, and the underlined numbers indicate that the second best performance. (Here, M: \textit{Mustache}; WL: \textit{Wearing Lipstick}, MSO: \textit{Mouth Slightly Open}, S: \textit{Smiling}, ML: \textit{Male}, E: \textit{Eyeglasses}, B: \textit{Bald}, NE: \textit{Narrow Eyes}, and Y: \textit{Young}).}
\label{table:Eval_CelebA_M}
\end{table*}

\subsection{Image Generation Task}
\noindent We choose \textit{Mustache} and \textit{Smiling} as intervention variables in the CelebA dataset. Our model and baselines generate the counterfactual image by negating the value of the intervention variable. For example, if the real image has $\textit{Mustache}=0$, the counterfactual image should have $\textit{Mustache}=1$.

This section describes (1) the visualization of counterfactual images from our model and baselines; (2) the analysis of the latent variables from each model; and (3) the quantitative analysis for the image generation task.
\subsubsection{Generated Counterfactual Images}
\noindent Figure \ref{fig:CF_img} visualizes the counterfactual images from the VAE based models for the real image. VAE based models can infer the exogenous variables from a specific real image. Counterfactual images from DCEVAE preserve the identity of real images except for the intervention variables. For example, female images on the first row in Figure \ref{fig:CF_img} are kept to be female while the intervened \textit{Mustache} ($M$) is added. On the contrary, CVAE makes images with \textit{Mustache}, but its gender is altered. On the other hand, CEVAE creates blurry images because CEVAE separates the decoders by the case of interventions, so a female image is rarely given to training the decoder of $M=1$. 
mCEVAE also fails in generating images because MMD in mCEVAE enforces the latent variables of $M=0$ and $M=1$ overlapped. The distribution of $M=1$ and the joint distribution of $M=0$ and $\textit{Male}=1$ have a large overlapping area, so mCEVAE can make counterfactual images with respect to $M=1$, but not for all images with $M=0$ and $\textit{Male}=0$, see the distribution in Figure \ref{fig:CF_img}. It should be noted that we did not include the GAN-based counterfactual generation because the GAN variations cannot produce the counterfactual image matched to a given real-world image.
\begin{figure}[h!]
\center
\includegraphics[width=0.15\columnwidth]{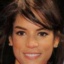}
\includegraphics[width=0.15\columnwidth]{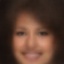}
\includegraphics[width=0.15\columnwidth]{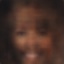}
\includegraphics[width=0.15\columnwidth]{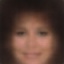}
\includegraphics[width=0.15\columnwidth]{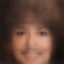}\\
\vspace{0.5mm}

\includegraphics[width=0.15\columnwidth]{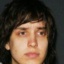}
\includegraphics[width=0.15\columnwidth]{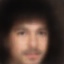}
\includegraphics[width=0.15\columnwidth]{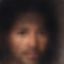}
\includegraphics[width=0.15\columnwidth]{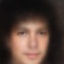}
\includegraphics[width=0.15\columnwidth]{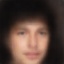}\\
\vspace{0.5mm}

\includegraphics[width=0.15\columnwidth]{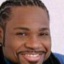}
\includegraphics[width=0.15\columnwidth]{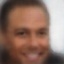}
\includegraphics[width=0.15\columnwidth]{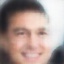}
\includegraphics[width=0.15\columnwidth]{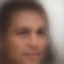}
\includegraphics[width=0.15\columnwidth]{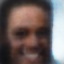}\\

\includegraphics[width=0.18\columnwidth]{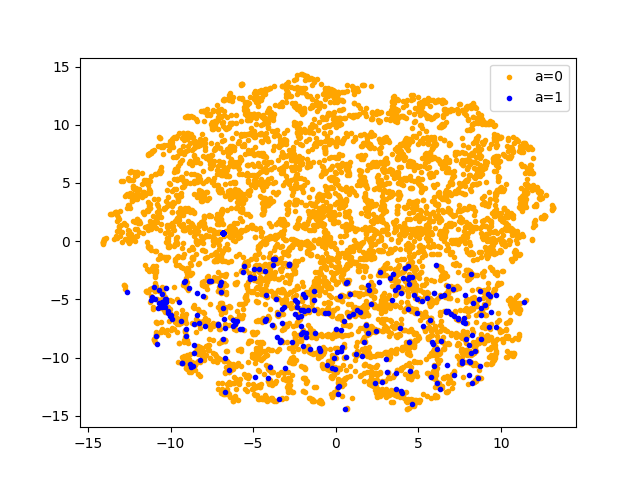}
\includegraphics[width=0.18\columnwidth]{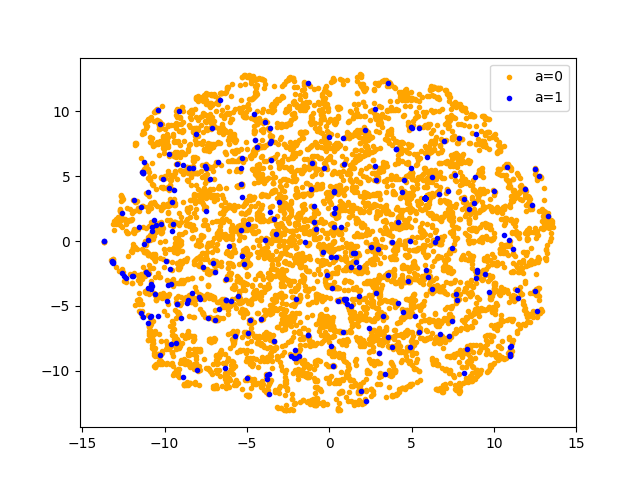}
\includegraphics[width=0.18\columnwidth]{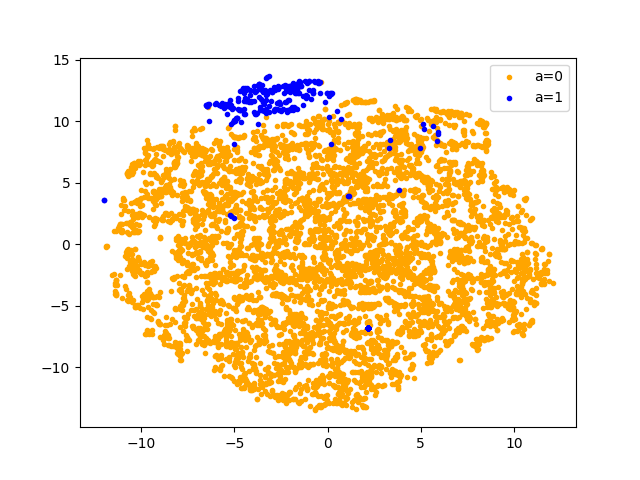}
\includegraphics[width=0.18\columnwidth]{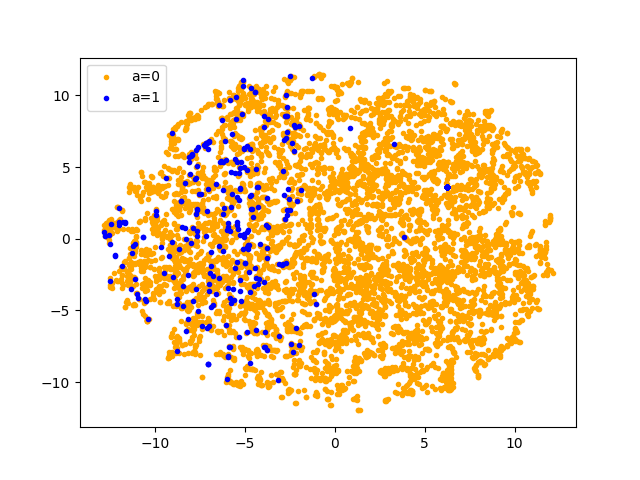}
\includegraphics[width=0.18\columnwidth]{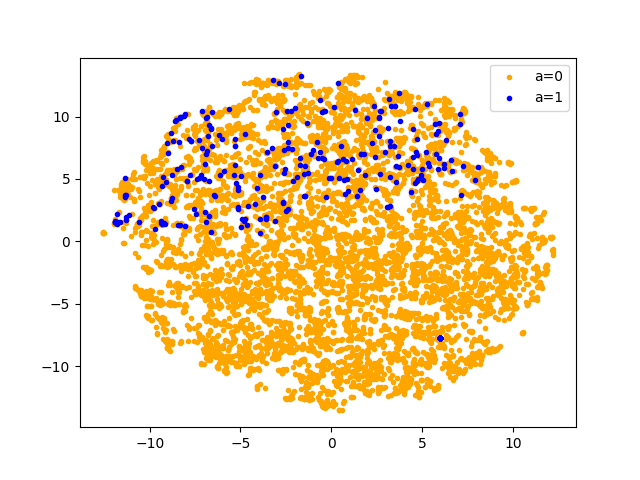}
\caption{(rows 1-3) Real image and its counterfactual images for \textit{Mustache} from DCEVAE, CVAE, CEVAE, and mCEVAE. (row 4) tSNE \cite{maaten2008visualizing} of latent variables $u_r$, $u_d$ in each model. Here, $M=0$ (yellow) and $M=1$ (blue).}
\label{fig:CF_img}
\end{figure}
\subsubsection{Quantitative Analysis on Image Generation Task}
\label{sec:quant}
\noindent This section shows the quantitative evaluations on generated counterfactual images with label classifier accuracies, Frechet Inception Distance (FID) score, and identity preserving (IP) metrics. Table \ref{table:Eval_CelebA_M} shows the result of counterfactual generation on \textit{Mustache}.
We compare the real and the counterfactual generated images of VAE-based approaches. DCEVAE has the highest attribute classification accuracies, so DCEVAE maintains the attribute of $x_r$ intact. Also, DCEVAE has the highest IP and the lowest FID scores, so the generated images are evaluated to be more natural than the other models. The generation of $Mustache$ is also measured by a classifier accuracy, and CEVAE, mCEVAE, and DCEVAE have similar accuracies. We compare reconstructed images and counterfactual images from VAEs and GANs. Except FID score, DCEVAE preserves attributes causing \textit{Mustache}, i.e. preserved \textit{Male} attribute.
\section{Conclusion}
This paper disentangles the exogenous uncertainty into two latent variables of 1) independent to interventions ($u_d$), and 2) correlated to interventions without causality ($u_r$). The disentanglement of latent variables resolves the limitation in previous works, including maintaining causality from the intervention ($a$) and altering all correlated information for counterfactual instances. Our model, DCEVAE, estimates the total effect and the counterfactual effect without a complete causal graph. In experiments, we showed that DCEVAE is comparable with other models with and without the complete causal graph. Both applications on the fair classification and the counterfactual generation showed the best quantitative performance by utilizing a counterfactual instance matched to the real-world instances.

\section{Ethical Impact}
Besides the COMPAS incident \cite{brennan2009evaluating}, governments and corporates utilize the AI-based screening and recommendation systems on a massive scale, and these applications are prone to the fairness question, particularly when the subject individual has minority backgrounds. This paper discusses the triad of 1) the fair classification, 2) the causality-based counterfactual generation, and 3) the latent disentanglement. If we were to maximize the classification accuracy, the proposed method would be irrelevant from such efficiency-oriented perspectives. However, our society always asks what-if questions, i.e., the veil of ignorance by Rawls \cite{rawls2009theory}. The limitation of the accuracy can be acceptable in two conditions: 1) the damage to the accuracy performance should be controlled and minimal, and 2) the limitation satisfies the argument that "I would accept the classification result under my altered background." This "altered background" is, in fact, an identical argument to the justice concept suggested by Rawls, which argues designing a taxation concept before determine whether you will be born in either high-income or low-income families. An emerging question is whether or not an AI can come up with a justifiable altered concept on what-if scenarios, so we work on the counterfactual generation to satisfy the fairness concept defined in the above. This counterfactual generation is further elaborated if an AI carefully dissect the context of an individual subject, and this is the disentanglement process when causality and a correlation should be distinguished. Given this series of arguments and necessity, this work is an important contribution in promoting the fairness of deployed AI systems, which are already running without a user's perception of its background operation. 

\section{Acknowledgement}
This research was supported by Basic Science Research Program through the National Research Foundation of Korea (NRF) funded by the Ministry of Education(NRF-2018R1C1B600865213)

\bibliography{reference.bib}

\end{document}